\begin{document}

\begin{frontmatter}
\title{On the Role of Contrast and Regularity in Perceptual Boundary Saliency}
\runtitle{Contrast and Regularity in Perceptual Boundary Saliency}

\begin{aug}
  \author{
    \fnms{Mariano} \snm{Tepper}%
    \corref{}%
    \thanksref{t2}
    \ead[label=e1]{mariano.tepper@duke.edu}
  }
  
  \thankstext{t2}{Work done while MT was with the Departamento de Computaci\'on, Facultad de Ciencias Exactas y Naturales, Universidad de Buenos Aires, Argentina.}

  \address{
    Department of Electrical and Computer Engineering, Duke University, USA.\\
    \printead{e1}
  }

  \author{
    \fnms{Pablo} \snm{Mus\'e}
    \ead[label=e3]{pmuse@fing.edu.uy}%
  }

  \address{
    Instituto de Ingenier\'ia El\'ectrica, Facultad de Ingenier\'ia, Universidad de la Rep\'ublica, Uruguay.\\
    \printead{e3}
  }

  \author{
    \fnms{Andr\'es} \snm{Almansa}
    \ead[label=e4]{andres.almansa@telecom-paristech.fr}%
  }

  \address{
    CNRS - LTCI UMR5141, Telecom ParisTech, France.\\
    \printead{e4}
  }

  \runauthor{M. Tepper et al.}

\end{aug}


\begin{abstract}
Mathematical Morphology proposes to extract shapes from images as connected components of level sets. These methods prove very suitable for shape recognition and analysis.
We present a method to select the perceptually significant (i.e., contrasted) level lines (boundaries of level sets), using the Helmholtz principle as first proposed by Desolneux~\etal Contrarily to the classical formulation by Desolneux~\etal where level lines must be entirely salient, the proposed method allows to detect partially salient level lines, thus resulting in more robust and more stable detections. We then tackle the problem of combining two gestalts as a measure of saliency and propose a method that reinforces detections. Results in natural images show the good performance of the proposed methods.
\end{abstract}

\begin{keyword}
\kwd{topographic maps}
\kwd{level lines}
\kwd{edge detection}
\kwd{Helmholtz principle}
\end{keyword}

\end{frontmatter}

\section{Introduction}

Shape plays a key role in our cognitive system: in the perception of shape lies the beginning of concept formation.

Artists have implicitly acknowledged the importance of shapes since the dawn of times. Indeed, despite that lines do not divide objects from their background in the real world, line drawings are present in much of our earliest recorded art and, remarkably, remained unchanged through history, see Figure~\ref{fig:lineDrawing}.

\begin{figure}

  \centerline{
    \includegraphics[width=.48\columnwidth]{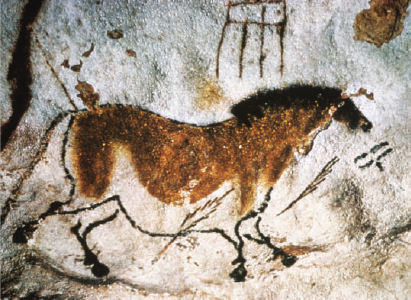} 
    \includegraphics[width=.48\columnwidth]{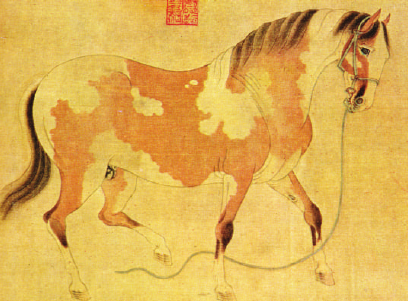}
  }

  \caption{Lines are used to convey the outer contours of the horses in a very similar way in these drawings, one from 15,000 BC (left: Chinese Horse, paleolithic cave painting at Lascaux, France) and the other from AD 1300 (right: Jen Jen-fa, detail from The Lean Horse and the Fat Horse, Peking Museum, China). Reprinted by permission from Macmillan Ltd: NATURE~\cite{cavanagh05}, copyright 2005.}
  \label{fig:lineDrawing}
\end{figure}

Although art may provide clues to understand shape perception, it tells us little from the formal point of view. Let us begin by defining what is a shape.

Phenomenologists~\cite{attneave54} conceive shape as a subset of an image, digital or perceptual, endowed with some qualities permitting its recognition. In this sense, both concepts, shape and recognition, are intrinsically intertwined: one has to define what is a shape in such a way that its recognition can be performed.

Following these lines of thought, gestaltists~\cite{arnheim} regard shape perception as the grasping of structural features found in or imposed upon the stimulus material. The Gestalt school has extensively studied phenomena that unveil and justify this definition~\cite{kanizsa79,wertheimer38}.

Formally, shapes can be defined by extracting contours from solid objects. In this context, shapes are represented and analyzed from an infinite-dimensional approach in which a shape is the locus of an infinite number of points~\cite{krim06}. This point of view leads to the active contours formulation~\cite{kass88} or to level-sets methods~\cite{serra83}. Although these shapes can be defined in any number of dimensions, e.g. the contour of a three dimensional solid object is a surface, we will restrict ourselves to the two dimensional case, following~Lisani \etal~\cite{lisani03-shape} and Cao \etal~\cite{cao08theory}.

We define an image as a function $u: \R^2 \rightarrow \R$, where $u(x)$ represents the gray level or luminance at point $x$. Our first task is to extract the topological information of an image, independent of the unknown contrast change function of the acquisition system.  This contrast change function can be modeled as a continuous and increasing function $g$. The observed data of an image $u$ might be any such $g(u)$. This simple argument leads to select the level sets~\cite{serra83}, or level lines, as a complete and contrast-invariant image description~\cite{caselles99,caselles10}.

Given an image $u$, the upper level set $\mathcal{X}_{\lambda}$ and the lower level set $\mathcal{X}^{\lambda}$ of level $\lambda$ are subsets of $\R^2$ defined by~\cite{caselles10}
\begin{align}
  \mathcal{X}_{\lambda} &= \lbrace x \in \R^2 \ |\ u(x) \geq \lambda \rbrace \textbf{,} \\
  \mathcal{X}^{\lambda} &= \lbrace x \in \R^2 \ |\ u(x) < \lambda \rbrace \textbf{.}
\end{align}
If the image $u$
is lower (\emph{resp.} upper) semi-continuous,
it can be reconstructed from the collection of its upper (\emph{resp.} lower) level sets by using the superposition principle~\cite{matheron75}:
\begin{align}
  u(x) &= \sup  \lbrace \lambda\ |\ x \in \mathcal{X}_{\lambda} \rbrace \textbf{,} \\
  u(x) &= \inf  \lbrace \lambda\ |\ x \in \mathcal{X}^{\lambda} \rbrace \textbf{.}
\end{align}
We define the boundaries of the connected components of a level set as a level line.

A gray-level digital image $u_d$ is a discrete function in a rectangular grid that takes values in a finite set, typically integer values between 0 and 255. To obtain a grid independent representation, we can consider an interpolation $u$ of $u_d$ with the desired degree of regularity (i.e., $u$ can be $C^1$, $C^2$, etc.). In this work we use bilinear interpolation, in which case the level lines have the following properties:
\begin{itemize}
  \item for almost all $\lambda$, the level lines are closed Jordan curves;
  \item by topological inclusion, level lines form a partially ordered set.
\end{itemize}
For extracting the level lines of such a bilinearly interpolated image we make use of the Fast Level Set Transform (FLST)~\cite{monasse00}. Notice that the FLST correctly handles singularities such as saddle points. We call this collection of level lines (along with their level) a topographic map.

In general, the topographic map is an infinite set and so only quantized grey levels are considered, ensuring that the set is finite. Since the connected components of level sets are ordered by the inclusion relation, the topographic map may be embedded in a hierarchical representation. To make things simple, a level line $L_i$ is a descendant of another line $L_j$ in the hierarchy if and only if $L_i$ is included in the interior of $L_j$. Figure~\ref{fig:topographicMap} depicts a simple example.

\begin{figure}
    \centerline{
        \hfill
        \includegraphics[height=.3\columnwidth]{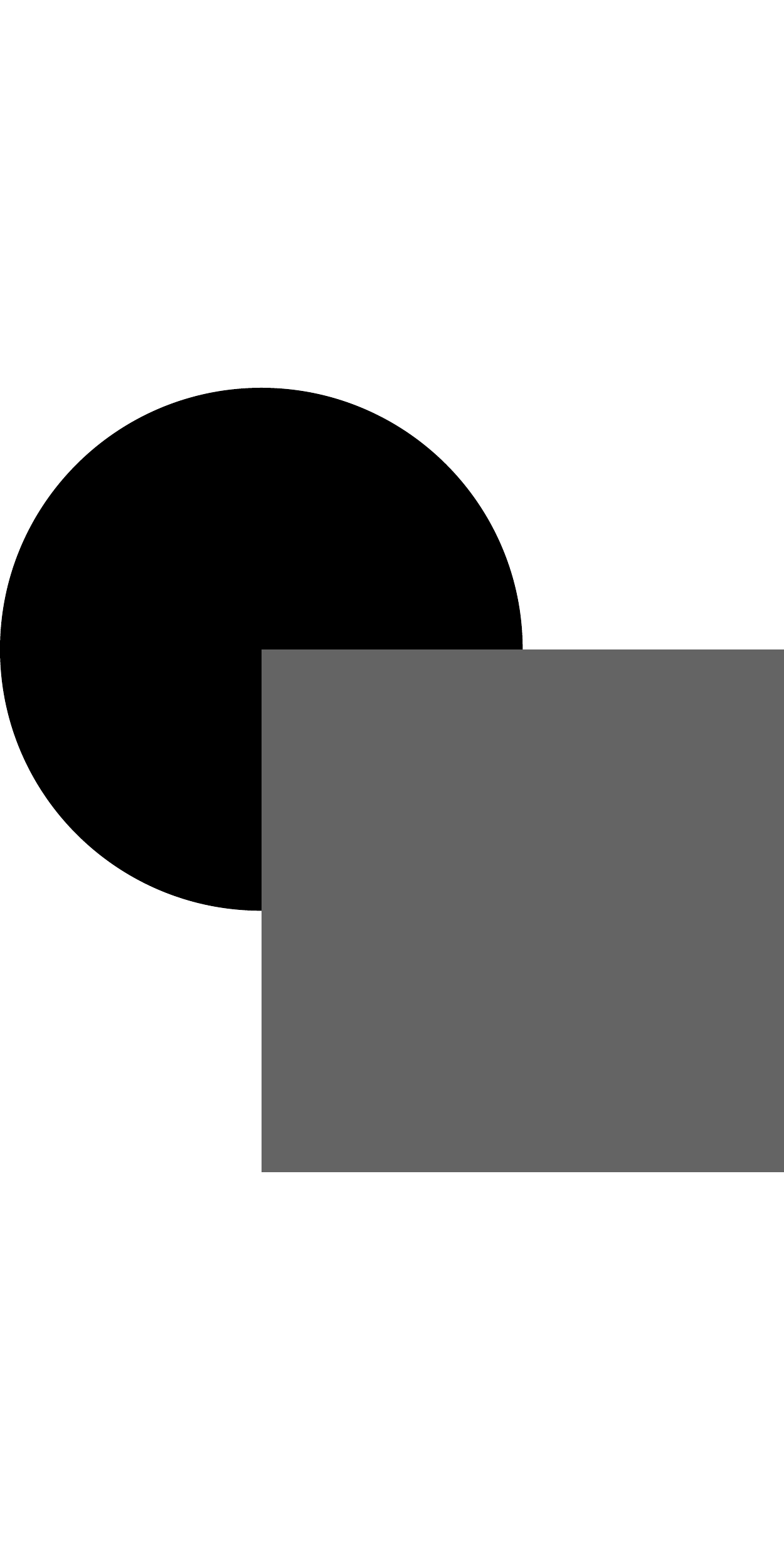}
        \hfill
        \includegraphics[height=.3\columnwidth]{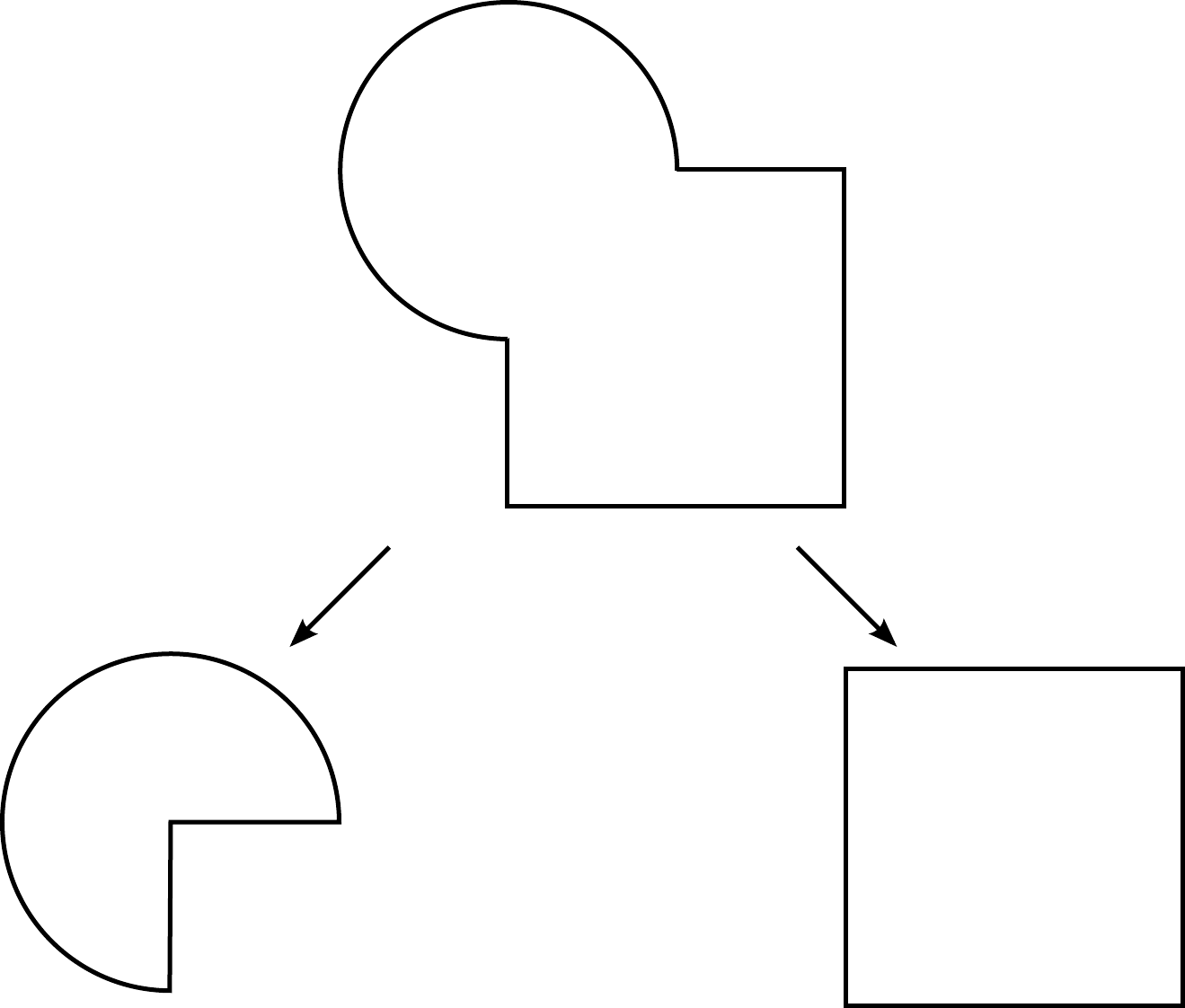}
        \hfill
    }
    
    \caption{On the left, original image. On the right, the hierarchical representation of the topographic map.}
    \label{fig:topographicMap}
\end{figure}

The Mathematical Morphology school~\cite{matheron75,serra83} has extensively studied the topographic map and its level sets, producing a whole set of tools for image analysis. Smoothing filters, usually described by Partial Differential Equations (PDE), can be proven to have an equivalent formulation in terms of iterated morphological operators~\cite{morelPDEs}. Hence, edge detectors can then be directly expressed by combining these operators.

The previous requirement leads us to define the set of level lines as a complete and contrast invariant image representation. In apparent contradiction to this fact, many authors, like Attneave, argue that ``information is concentrated along contours (regions where contrast changes abruptly)''~\cite{attneave54}. For example, edge detectors, from which the most renowned is Canny's~\cite{canny86}, rely on this fact. In summary, only a subset of the topographic map is necessary to obtain a \emph{perceptually} complete description.

The search for perceptually important lines will focus on unexpected configurations, rising from the perceptual laws of Gestalt Theory~\cite{kanizsa79,wertheimer38}.
From an algorithmic point of view, the main problem with Gestalt rules is their qualitative nature.
Desolneux \etal~\cite{desolneux08} developed a detection theory which seeks to provide a quantitative assessment of gestalts. This theory is often referred as Computational Gestalt and it has been successfully applied to numerous gestalts and detection problems~\cite{cao2005,grompone10,rabin09}. It is primarily based on the Helmholtz principle which states that conspicuous structures may be viewed as exceptions to randomness. In this approach, there is no need to characterize the elements one wishes to detect but contrarily, the elements
one wishes to avoid detecting, i.e., the background model. When an element sufficiently deviates from the background model, it is considered meaningful and thus, detected.

Within this framework, Desolneux~\etal~\cite{dmm01} proposed an algorithm to detect contrasted level lines in grey level images, called meaningful boundaries. Further improvements to this algorithm were proposed by Cao \etal~\cite{cao2005}.

In this work, we build upon these methods, presenting several contributions:
\paragraph{\textbf{From global to partial curve saliency}.} The original meaningful boundaries are totally salient curves (i.e., every point in the curve is salient). We propose a modification that allows detecting partially salient curves as meaningful boundaries. This definition agrees more tightly to the observation that pieces of level lines correspond to object contours and also yields more robust results.
\paragraph{\textbf{An extended definition of saliency}.} The criterion used to establish saliency in the original meaningful boundaries algorithm is contrast. Cao \etal~\cite{cao2005} proposed to determine saliency as a cooperation of two criteria: contrast and regularity. We study some theoretical and practical issues in their formulation. We then present a new formulation in which both aforementioned criteria compete, instead of cooperating. It is theoretically sound and yields improved detections, with respect to the ones obtained by using only contrast. The previous partial curve saliency criterion proves determinant in this new formulation

Strictly speaking, all the proposed algorithms are only invariant to affine contrast changes. This can be easily proven when contrast (i.e., the gradient magnitude) is used as the saliency measure~\cite[Lemma 1, p.~19]{cao08theory}. Nevertheless, the set of meaningful boundaries is not significantly affected by slight deviations from this class of contrast changes.

As a side note, we point out that there are two remaining steps to address in order to develop a complete shape detection system: smoothing, and geometrical invariance. Let us briefly discuss them for the sake of completeness.

First, during the acquisition, details much too fine to be perceptually relevant are introduced. It is necessary to use a suitable filtering mechanism. Invariance to these fine details may be handled by an appropriate smoothing procedure, i.e., the Affine morphological Scale Space (AMSS)~\cite{moisan98} or by a subsequent suitable shape description method~\cite{tepper09matching}.

Second, representations must be invariant to weak projective transformations. It can be shown that all planar curves within a large class can be mapped arbitrarily close to a circle by projective transformations~\cite{astrom95-limitations}. Moreover, full projective invariance is neither perceptually real (humans have great difficulties to recognize objects under strong perspective effects) nor computationally tractable. In this sense, affine invariance is the most we can impose in practice. At the same time, the effect of any optical acquisition system can be modeled by a convolution with a smoothing radial kernel. It does not commute with projective transformations and must be taken into account in the recognition process. A multiscale analysis is the only feasible way to treat it correctly. Both concepts, affine invariance and multiscale analysis are consistently integrated in the work by Morel and Yu~\cite{morel09ASIFT}.

The aforementioned tools that cover these issues can be directly applied to the level lines detected by our method. For a wide perspective of the complete shape recognition chain see the book by Cao~\etal~\cite{cao08theory}.

The paper is structured as follows.
In Section~\ref{sec:meaningfulBoundaries} we recall the definition of meaningful
boundaries and present a generalization that allows to detect partially salient curves.
In Section~\ref{sec:meaningfulSmoothBoundaries} we address the combination of contrast and regularity for the detection of meaningful boundaries.
We conclude in Section~\ref{sec:conclusions}.

\section{Meaningful Contrasted Boundaries}
\label{sec:meaningfulBoundaries}

Let us begin by formally explaining the meaningful boundaries algorithm by Desolneux \etal~\cite{dmm01}.

Let $C$ be a continuous level line of the (bilinearly interpolated) image $u$.
We consider a discrete sampling of this curve, and denote it by $x_0, x_1, \dots, x_{n-1}$
\footnote{This corresponds to the following 2 steps: i) The intersection of the continuous level-line $C$ with the Qedgels of the image gives a set of $m$ points as explained in \cite{caselles10}. ii) We sample $n=\lfloor m/2 \rfloor$ points by taking one out of every two points}.
This particular sampling is chosen to ensure that $|Du|(x_i)$ and $|Du|(x_{i+1})$ are statistically independent almost everywhere when pixel values of $u$ are considered to be independent  The gradient magnitude is computed using a standard finite difference scheme on a $2 \times 2$ neighborhood.

\begin{notation}
Let $H_c$ be the tail histogram of $|Du|$, defined by
\begin{equation}
H_c (\mu) \stackrel{\mathrm{def}}{=} \frac{\# \{ x \in u,\ |Du|(x) > \mu \}}{\# \{ x \in u,\ |Du|(x) > \min_{x \in u} |Du|(x) \}},
\end{equation}
where $Du$ can be computed by a standard finite differences scheme on a $2 \times 2$ neighborhood.
\label{not:H_c}
\end{notation}
\begin{definition}
\label{def:nfaContrastedCurve}
\textnormal{(Desolneux~\etal~\cite{dmm01})}
Let $\mathcal{C}$ be a finite set of $N_{ll}$ level lines of $u$. A level line $C \in \mathcal{C}$ is a DMM \meps-meaningful contrasted boundary (DMM-MCB) if
\begin{equation}
\NFA(C) \stackrel{\mathrm{def}}{=} N_{ll} \ H_c ( \min_{x \in C} |Du|(x) ) ^{l/2} < \eps
\end{equation}
where $l$ is the length of $C$. This number is called number of false alarms (NFA) of $C$.
\end{definition}
Actually, $l$ denotes the Euclidean length of the discrete approximation of $C$. In \cite{cao08theory} the authors assume that $l=2n$, but we found that this approximation is not accurate enough, which leads us to make here the distinction between $l$ and $2n$.

Algorithm~\ref{algo:meaningfulBoundaries} shows a possible procedure to obtain all \meps-meaningful contrasted boundaries.
\begin{algorithm}[t]
\SetKwInOut{Input}{input}\SetKwInOut{Output}{output}

\Input{An image $u$ and a scalar \meps.}
\Output{A set of closed curves $\mathcal{S}_\mathrm{res}$.}

$\mathcal{S} \gets \mathrm{FLST}(u)$\tcp*{Compute the set of level lines}
$N_{ll} \gets \#\{ \mathcal{S} \}$\;
Compute the tail histogram $H_c$ of $|Du|$\;
$\mathcal{S}_\mathrm{res} \gets \emptyset$\;
\For{$C \in \mathcal{S}$}{
Compute the length $l$ of $C$\;
$\displaystyle \mu \gets \min_{x \in C} |Du|(x)$\;
$\displaystyle \mathrm{nfa}_C \leftarrow N_{ll} \ H_c ( \mu ) ^{l/2}$\;
\lIf{$\mathrm{nfa}_C < \eps$}{
$\mathcal{S_\mathrm{res}} \gets \mathcal{S_\mathrm{res}} \cup \{ C \}$
}
}
\Return{$\mathcal{S}_\mathrm{res}$}\;
\caption{Computation of \meps-meaningful boundaries in image $u$.}
\label{algo:meaningfulBoundaries}
\end{algorithm}

\paragraph{Background model.} Now we shall check the consistency of Definition~\ref{def:nfaContrastedCurve}, namely that, in average, no more than \meps curves are detected by chance. In order to make this assertion more precise (in Proposition~\ref{prop:contrastedCurvesNFA} below) we need to define the (\emph{a contrario}) statistical background model that is used to present random input images to the boundary detector.
Following \cite{cao2005,dmm01} we do not directly introduce a statistical image model, but we only state the statistical properties that each level line $C$ in the input set $E$ of level lines should satisfy.
The actual shape of the curve does not matter. We only require that a random gradient value $|Du|(x_i)$ be associated to each of the $n$ regularly sampled points $x_0, x_1, \dots, x_{n-1}$ of $C$, that these $n$ random variables be independent, and with the same distribution $P(|Du|(x_i)>\mu) = H_c(\mu)$. 

\begin{proposition}
\label{prop:contrastedCurvesNFA}
The expected number of DMM \meps-mean\-ing\-ful contrasted boundaries in a random set $E$ of random curves is smaller than \meps, if $E$ follows the above background model.
\end{proposition}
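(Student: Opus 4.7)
The plan is to bound the expectation by a sum of per-curve detection probabilities via linearity of expectation, and to show that each such probability is at most $\eps/N_{ll}$.

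First I would fix an arbitrary level line $C \in E$ with $n$ sample points $x_0,\ldots,x_{n-1}$ and Euclidean discrete length $l$. Under the background model, $|Du|(x_0),\dots,|Du|(x_{n-1})$ are i.i.d.\ with common survival function $H_c$, so by independence
\[
P\Bigl(\min_{0 \le i < n} |Du|(x_i) > \mu\Bigr) = H_c(\mu)^n.
\]

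Next I would identify the law of $Y := H_c(\min_i |Du|(x_i))$. Because $H_c$ is nonincreasing, $Y = \max_i H_c(|Du|(x_i))$, and each $H_c(|Du|(x_i))$ is uniform on $[0,1]$ by the probability integral transform (in the continuous case; otherwise a stochastic-domination version of PIT gives the inequality in the right direction). Therefore $P(Y \le t) \le t^n$ for all $t \in [0,1]$, which yields
\[
P(\mathrm{NFA}(C) < \eps) = P\bigl(Y^{l/2} < \eps/N_{ll}\bigr) \le (\eps/N_{ll})^{2n/l}.
\]

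The key geometric input is the relation $l \le 2n$: by the two-step sampling procedure described in the footnote (intersection with Qedgels, then one out of two), the Euclidean separation between consecutive $x_i$ is at most $2$, and summing along $C$ yields $l \le 2n$, i.e.\ $2n/l \ge 1$. Combined with the trivial reduction $\eps \le N_{ll}$ (the claim being vacuous otherwise), this gives $(\eps/N_{ll})^{2n/l} \le \eps/N_{ll}$. Finally, by linearity of expectation over the $N_{ll}$ curves in $E$,
\[
E\bigl[\#\{C \in E : \mathrm{NFA}(C) < \eps\}\bigr] = \sum_{C \in E} P(\mathrm{NFA}(C) < \eps) \le N_{ll}\cdot \frac{\eps}{N_{ll}} = \eps.
\]

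The probabilistic backbone is essentially just PIT plus independence; I expect the main obstacles to be two delicate bookkeeping points. First, justifying $l \le 2n$ rigorously for the polygonal approximation of an arbitrary bilinear level line, since the ``one out of every two'' rule only controls Qedgel-to-Qedgel hops. Second, handling the atoms of the discrete tail histogram $H_c$, so that the PIT step is phrased as the stochastic-domination inequality $P(Y \le t) \le t^n$ rather than an equality. Both are standard moves in the \emph{a contrario} literature but should be stated explicitly to make the proposition airtight.
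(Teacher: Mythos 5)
Your probabilistic skeleton is sound and is, in essence, the very proof the paper relies on: the paper defers this proposition to Cao~\etal, and the identical scheme --- linearity of expectation over the $N_{ll}$ curves plus a per-curve bound $\Pr(\NFA(C)<\eps)\le\eps/N_{ll}$ obtained from the stochastic-domination form of the probability integral transform --- is what the paper itself uses (via Lemma~\ref{lem:classic}) to prove the TMA generalization in Appendix~A. Your handling of the atoms of $H_c$ and of the identity $H_c(\min_i c_i)=\max_i H_c(c_i)$ is also correct.

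The genuine gap is the step $l\le 2n$, which you treat as delicate bookkeeping but which is in fact the crux, and your justification of it fails. Each Qedgel hop is bounded only by the diagonal of a unit cell, i.e.\ by $\sqrt{2}$, not by $1$; consecutive sampled points are therefore only guaranteed to be within distance $2\sqrt{2}$ of each other, giving $l\le 2\sqrt{2}\,n$, not $l\le 2n$. Moreover the inequality $l\le 2n$ is genuinely false for some bilinear level lines. Take a checkerboard-type image: $u=\lambda$ at pixels whose coordinates have even sum, and $u=\lambda+B$, $u=\lambda-B$ alternately at the others. The level line at level $\lambda+\delta$, $0<\delta\ll B$, around one of the maxima is a diamond whose four vertices lie near the four neighbouring saddle points: it crosses the grid lines only $m=4$ times (so $n=2$), yet its discrete length is $l\approx 4\sqrt{2}$, i.e.\ $l\approx 2\sqrt{2}\,n > 2n$. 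For such a curve your chain only yields $\Pr(\NFA(C)<\eps)\le(\eps/N_{ll})^{2n/l}=(\eps/N_{ll})^{1/\sqrt{2}}$, and for a background ensemble of $N_{ll}$ such curves the total is $N_{ll}^{\,1-1/\sqrt{2}}\,\eps^{1/\sqrt{2}}$, which exceeds $\eps$ for large $N_{ll}$; when $H_c$ has no atoms this value is attained exactly, so the proposition, as you argue it, would actually be false for that ensemble. Nothing in the paper's background model excludes this: the model constrains only the $n$ sampled gradient values, not the shape of the curve, hence not the relation between $l$ and $n$.

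The fix must therefore be structural rather than technical: either (i) add $l\le 2n$ (equivalently $l/2\le n$) as an explicit hypothesis on the curve ensemble --- this is what Cao~\etal\ implicitly do by assuming $l=2n$, and it is precisely why the paper is careful to ``make the distinction between $l$ and $2n$''; or (ii) replace the exponent $l/2$ in Definition~\ref{def:nfaContrastedCurve} by the number $n$ of independent samples (or by $\min(n,l/2)$), after which your argument goes through verbatim with no geometric lemma at all.
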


We refer to the work by Cao~\etal~\cite{cao2005} for a complete proof.

Proposition~\ref{prop:contrastedCurvesNFA} allows to interpret the meaningful contrasted curves in Definition~\ref{def:nfaContrastedCurve} within a multi-hypothesis testing framework: namely, the curves detected on an image $u$ are those that allow to reject the null hypothesis (background model) \emph{$\Hy_0$: the values of $|Du|$ are i.i.d., and follow the same distribution as gradient magnitude histogram of the image $u$ itself}.


Definition~\ref{def:nfaContrastedCurve} has some drawbacks. From one side, the use of the minimum or any punctual measure, for the case, can be an unstable measure in the presence of noise. From the other side, it demands the curve to be not likely \emph{entirely} generated by noise (i.e., well contrasted). We already stated that \emph{pieces} of level lines match object boundaries. 
Moreover, as seen on Figure~\ref{fig:conceptMinContrast}, the use of the minimum contrast seems in contradiction with what we perceive. It is therefore too restrictive to impose such a constraint. Since we search for object boundaries, we think the natural model is to select level lines that have well contrasted parts.

\begin{figure}
  \centerline {
      \includegraphics[width=.4\columnwidth]{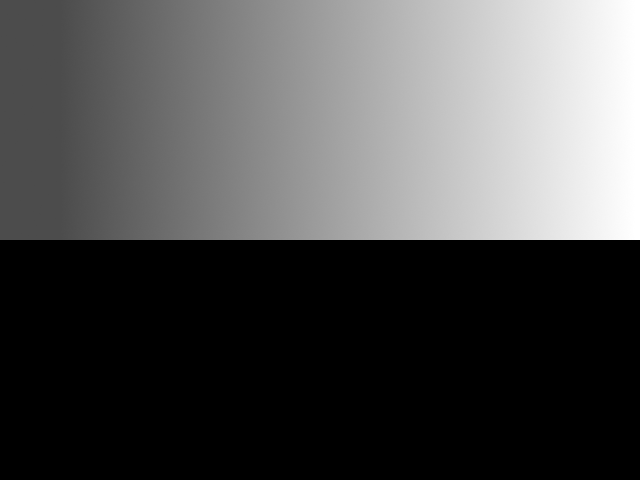}
      \hspace{.2in}
      \includegraphics[width=.4\columnwidth]{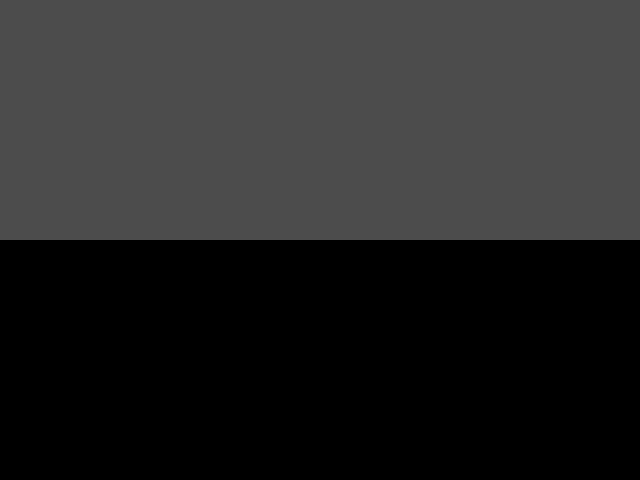}
  }
  
  \caption{Conceptual consequence of using the minimum contrast to detect boundaries. The left image contains a gray gradient and an uniformly black region on its upper and lower halves respectively. The right image is constructed by putting in its upper half the minimum gray level on the left image's upper half. If our perception was tuned to use the minimum contrast to detect the boundary between the two regions, we would perceive that the image on the right is as contrasted as the one on the left, which is clearly not the case.}
  \label{fig:conceptMinContrast}
\end{figure}

\subsection{Partially Contrasted Meaningful Boundaries}

In this direction, we propose to modify the definition of the number of false alarms of a curve, to support a new model where one detects partially contrasted curves. This modification was briefly introduced in~\cite{tepper09msc} and is now explained in detail.

\begin{notation}
 Let $x_0, x_1, \dots, x_{n-1}$ denote $n$ points of a curve $C$ of length $l$. Let $s$ be the mean Euclidean distance between neighboring points. For $x \in C$ denote by $c_i$ ($0 \leq i < n$) the contrast at $x_i$ defined by $c_i = |Du|(x_i)$. We note by $\mu_k$ ($0 \leq k < n$) the $k$-th value of the vector of the values $c_i$ sorted in ascending order.
\end{notation}

For $k \leq N \in \N$ and $p \in [0, 1]$, let us denote by
\begin{equation}
  \bintail (N, k; p) \stackrel{\mathrm{def}}{=} \sum_{j = k}^{N} \binom{N}{j} p^j (1 - p)^{N - j}
\end{equation}
the tail of the binomial law. Desolneux~\etal~present a thorough study of the binomial tail and its use in the detection of geometric structures~\cite{desolneux08}.

The regularized incomplete beta function, defined by 
\begin{equation}
  I(x; a, b) = \frac{\int_0^x t^{a-1} (1-t)^{b-1} dt}{\int_0^1 t^{a-1} (1-t)^{b-1} dt} \text{,}
\end{equation}
is an interpolation $\widetilde{\bintail}$ of the binomial tail to the continuous domain~\cite{desolneux08}:
\begin{equation}
    \widetilde{\bintail} (n, k; p) = I(p; k, n-k+1)
\end{equation}
where $n, k \in \R$. In the case $n$ and $k$ are natural numbers $\widetilde{\bintail} (n, k; p) = \bintail (n, k; p)$.
Additionally the regularized incomplete beta function can be computed very efficiently~\cite{numericalRecipes}.

Following Meinhardt~\etal~\cite{meinhardt08}, for a given curve the probability under $\Hy_0$ that at least $k$ among the $n$ values $c_j$ are greater than $\mu$ is given by the tail of the binomial law $\bintail (n, k; H_c(\mu))$. Thus it is interesting, and more convenient, to extend this model to the continuous case using the regularized incomplete beta function
\begin{equation}
  \widetilde{\bintail} (n \cdot \lsn{s}, k \cdot \lsn{s}; H_c(\mu))
\end{equation}
where $\lsn{s} = \frac{l}{s \cdot n}$ and acts as a normalization factor.
This represents the probability under $\Hy_0$ that, for a curve of length $l$, some parts with total length greater or equal than $\lsn{s} (n-k)$ have a contrast greater than $\mu$.

\begin{definition}
  \label{def:nfaContrastedCurve_k}
  Let $\mathcal{C}$ be a finite set of $N_{ll}$ level lines of $u$. A level line $C \in \mathcal{C}$ is a TMA \meps-meaningful boundary if
  \begin{equation}
  \NFA_K(C) \stackrel{\mathrm{def}}{=} N_{ll}\ K\ \min_{k < K} \widetilde{\bintail} (n \cdot \lsn{2}, k \cdot \lsn{2}; H_c(\mu_k)) < \eps
  \end{equation}
  where $K$ is a parameter of the algorithm. This number is called number of false alarms (NFA) of $C$.
\end{definition}

The parameter $K$ controls the number of points that we allow to be likely generated by noise, that is, a curve must have no more than $K$ points with a ``high'' probability of belonging to the background model. It is simply chosen as a percentile of the total number of points in the curve.
The procedure is similar to Algorithm~\ref{algo:meaningfulBoundaries} but replacing $\NFA$ by $\NFA_K$.

As usual, Definition~\ref{def:nfaContrastedCurve_k} is correct if the following proposition holds.
\begin{proposition}
  \label{prop:nfaContrastedCurve_k}
  The expected number of TMA \meps-mean\-ing\-ful boundaries, in a finite random set $E$ of random curves is smaller than \meps.
\end{proposition}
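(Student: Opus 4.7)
The plan is to follow the standard a contrario template already used for Proposition~\ref{prop:contrastedCurvesNFA}, with one extra union bound to absorb the minimum over $k$ inside $\NFA_K$.

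First, by linearity of expectation, the expected number of TMA \meps-meaningful boundaries equals $\sum_{C \in E} P\bigl(\NFA_K(C) < \eps\bigr)$. Since $N_{ll} = \#E$, the event $\NFA_K(C) < \eps$ rewrites as
\begin{equation*}
\min_{k < K}\, \widetilde{\bintail}\bigl(n\cdot\lsn{2},\; k\cdot\lsn{2};\; H_c(\mu_k)\bigr) < \frac{\eps}{N_{ll}\, K}.
\end{equation*}
A union bound over the $K$ indices then gives
\begin{equation*}
P\bigl(\NFA_K(C) < \eps\bigr) \;\leq\; \sum_{k=0}^{K-1} P\!\left( \widetilde{\bintail}\bigl(n\cdot\lsn{2},\, k\cdot\lsn{2};\, H_c(\mu_k)\bigr) < \frac{\eps}{N_{ll}\, K} \right).
\end{equation*}

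Second, I would show that each summand is bounded by $\eps/(N_{ll}\,K)$. This is the classical fact that $\widetilde{\bintail}$ evaluated at an order statistic of the background distribution is a valid $p$-value. Under $\Hy_0$ the random variables $H_c(c_i)$ are i.i.d.\ and stochastically dominate a uniform on $[0,1]$, so $H_c(\mu_k)$ is (essentially) the appropriate order statistic of a uniform sample and hence Beta-distributed; applying the regularized incomplete beta function $I(\cdot;a,b)$, which is exactly the Beta CDF, produces a variable that is stochastically dominated by $U(0,1)$ by the probability integral transform. The length normalization $\lsn{2}$ simply rescales $(n,k)$ to non-integer arguments, but since $\widetilde{\bintail}$ is still monotone in its last argument and remains a proper CDF, the $p$-value bound survives.

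Third, summing the $K$ identical bounds yields $P(\NFA_K(C) < \eps) \leq \eps/N_{ll}$, and summing again over the $N_{ll}$ curves in $E$ gives the desired $\eps$.

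The main obstacle is the single-index $p$-value bound in the second step. Three subtleties demand care: (i) matching the order-statistic index with the index $k$ used as the second argument of $\widetilde{\bintail}$, so that one really gets a CDF-transform; (ii) justifying that replacing $(n,k)$ by the non-integer pair $(n\lsn{2}, k\lsn{2})$ via the incomplete-beta interpolation preserves the $p$-value property, rather than assuming it by analogy with the discrete binomial case of Meinhardt~\etal; and (iii) handling the atoms of $H_c$ on a discrete image, which prevent $H_c(c_i)$ from being exactly uniform, but only in a direction that strengthens the desired inequality via stochastic domination.
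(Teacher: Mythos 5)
Your proposal shares the paper's skeleton exactly---linearity of expectation, a union bound over the $K$ indices and over the $N_{ll}$ curves, and a per-index bound of $\eps/(N_{ll}\,K)$---but it justifies the per-index bound by a genuinely different (dual) mechanism. The paper places the randomness on the \emph{length}: it introduces $L_k$, the random length of the pieces of $C$ where $|Du|\geq\mu_k$, postulates (as part of the background model for continuous curves) that $S(x)=\widetilde{\bintail}(n\cdot\lsn{2},x;H_c(\mu_k))$ is the tail function of $L_k$ under $\Hy_0$, and then applies Lemma~\ref{lem:classic} to $X=L_k$ to get $\Pr(E(C,k))\leq\eps/(N_{ll}\,K)$ directly. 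You instead place the randomness on the contrast order statistic $H_c(\mu_k)$, identify it (up to the stochastic domination caused by atoms) with a Beta-distributed order statistic of uniforms, and use that $\widetilde{\bintail}$ is precisely the corresponding Beta CDF, so the NFA term is a probability integral transform. The two views are exchanged by the classical duality $\bintail(n,j;p)=\Pr\bigl(U_{(j)}\leq p\bigr)$, so they prove the same fact. Yours has the merit of explaining \emph{why} the regularized incomplete beta function is the right interpolation, and it is exact in the discrete case; the paper's view makes your main acknowledged obstacle---your subtlety (ii)---disappear by construction: once the interpolated tail is \emph{defined} to be the law of $L_k$ in the background model, Lemma~\ref{lem:classic} applies verbatim and no comparison between integer and non-integer Beta parameters is ever needed. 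To turn your plan into a complete proof, the cleanest route is to adopt that dual step for the continuous normalization: your Beta/PIT argument settles the discrete case, and the passage to $(n\lsn{2},k\lsn{2})$ is then a modeling convention of $\Hy_0$, not a theorem to be proved.

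Two corrections. First, a wording error with mathematical content: a valid p-value satisfies $\Pr(P<t)\leq t$, i.e., it stochastically \emph{dominates} $U(0,1)$ (it is superuniform); you wrote ``stochastically dominated by $U(0,1)$'', which is the opposite inequality and would not yield the bound. Your point (iii) shows you know the intended direction, but fix the sentence. Second, your subtlety (i) is real and in fact bites the paper itself: with $\mu_k$ the $k$-th \emph{smallest} contrast, $H_c(\mu_k)$ is an upper order statistic of the uniforms, and its law matches the Beta distribution whose CDF is $\widetilde{\bintail}$ with second argument proportional to $n-k$, not $k$; this is also what the text surrounding Definition~\ref{def:nfaContrastedCurve_k} describes, and it is what makes the claim $\NFA_{K=1}(C)=\NFA(C)$ true. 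So the index bookkeeping you flag is precisely the one to carry out carefully---the paper's own proof glosses over it as well.
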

This very important proof is given in Appendix~\ref{sec:proofNFA_C} to avoid breaking the flow of the discussion.

This new model is an extension of the previous one, since $\NFA_{K=1}(C) = \NFA(C)$. In fact, Definition~\ref{def:nfaContrastedCurve_k} is no other than a relaxation of Definition~\ref{def:nfaContrastedCurve}. We should expect to have new detections and to detect the same lines, with increased stability. This comes from the fact that several punctual measures are used and the minimum is taken over their probability. This was experimentally checked and some results can be seen in Section~\ref{sec:experimentsMeaningful}.

We apply the DMM-MCB and TMA-MCB algorithms to an image of white noise, in order to experimentally check that when $\eps=1$ the number of detections is in average lower than 1. This is confirmed in Figure~\ref{fig:whiteNoise_C}, where the number of detections is actually zero. Even when $\eps=1000$, the number of detections remain very small.

\begin{figure*}
    \centering
%
%
%
    \includegraphics[width=\textwidth]{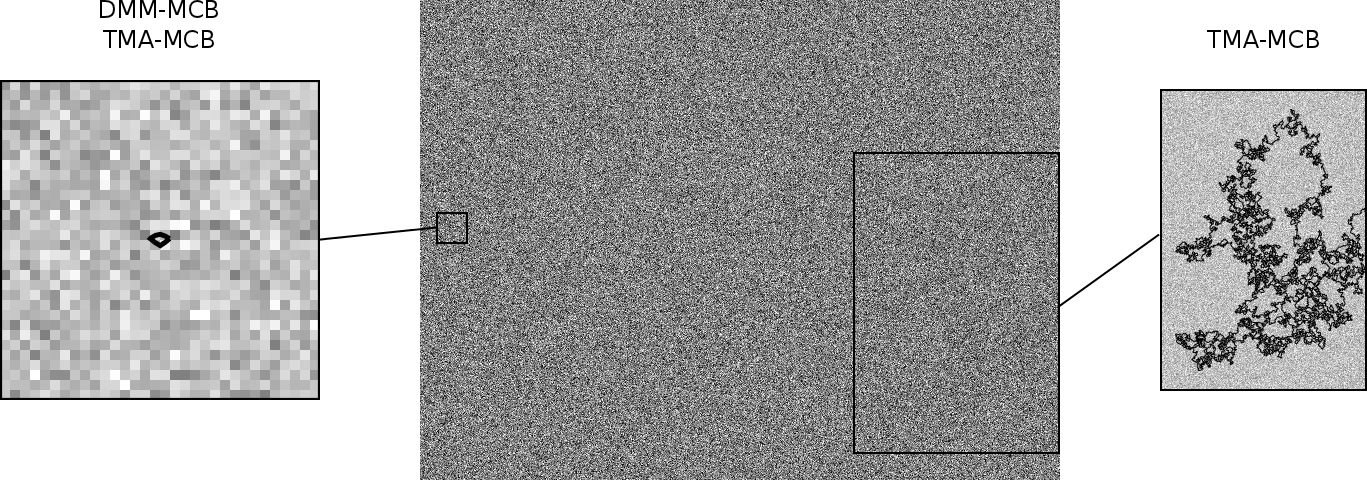}
    \caption{There are 4845004 level lines in the center image of a Gaussian noise with standard deviation 50. By setting $\eps=1000$, DMM-MCB detects one boundary (left detail) and TMA-MCB detects two boundaries (left and right details). At $\eps=1$, both methods detect zero boundaries.}
    \label{fig:whiteNoise_C}
\end{figure*}

In~\cite{cao2005}, other modifications are proposed to the basic meaningful boundaries algorithm.
On the one hand, meaningfulness is computed locally. We will not discuss this further, since we are only interested in the redefinition of the NFA and its consequences. In any case, our redefined NFA can also be used in the same local detection process.
On the other hand, only level lines that remain stable across several zoom scalings are detected. The reason behind this approach is to counter the effect of small perturbations (i.e., noise) in the image. Our scheme handles naturally this effect by minimizing a probability instead of a punctual measure. This was confirmed in our experiments where multiscale stabilization did not provide any visible improvement.

\subsection{Maximal boundaries}

Because of interpolation, meaningful boundaries usually appear in parallel and redundant groups, called bundles. Since the meaningful level lines inherit the tree structure of the topographic map, Desolneux~\etal~\cite{desolneux08} use this structure to efficiently remove redundant boundaries. From now on, we work on the tree composed only of meaningful boundaries.

\begin{definition}
\textnormal{(Monasse and Guichard~\cite{monasse00})}
A monotone section of a level lines tree is a part of a branch such that each node has a unique son and where grey level is monotone (no contrast reversal). A maximal monotone section is a monotone section which is not strictly included in another one.
\end{definition}

\begin{definition}
\textnormal{(Desolneux~\etal~\cite{dmm01})}
A meaningful boundary is maximal meaningful if it has a minimal NFA in a maximal monotone section.
\end{definition}
Algorithm~\ref{algo:newMeaningfulBoundaries} depicts the overall proposed procedure.
\begin{algorithm}[t]
    \SetKwInOut{Input}{input}\SetKwInOut{Output}{output}
    
    \Input{An image $u$, a scalar \meps an integer $K$.}
    \Output{A set of closed curves $\mathcal{S}_\mathrm{res}$.}

    $\mathcal{S} \gets \mathrm{FLST}(u)$\tcp*{Compute the set of level lines}
    $N_{ll} \gets \# \{ \mathcal{S} \}$\;
    Compute the tail histogram $H_c$ of $|Du|$\;
    $\mathcal{S}_\mathrm{res} \gets \emptyset$\;
    \For{$C \in \mathcal{S}$}{
        Compute the length $l$ of $C$\;
        $n \gets \# \{ x \in C\}$\;
        $\mu_1, \dots, \mu_K \gets$ the $K$ smallest values of $|Du|(x)$, $x \in C$\;
        $\displaystyle \mathrm{nfa}_C \gets N_{ll} K \min_{k < K} \widetilde{\bintail} (\tfrac{l}{2}, k \cdot \tfrac{l}{2n}; H_c(\mu_k))$\;

        \lIf{$\mathrm{nfa}_C < \eps$}{
            $\mathcal{S}_\mathrm{res} \gets \mathcal{S}_\mathrm{res} \cup \{ C \}$
        }
    }
    \tcp{Maximality-based pruning:}
    \Repeat{all monotone sections have been explored}{
        Find an unexplored monotone section $\mathcal{S}_\mathrm{M}$ in the level lines tree\;
        $\displaystyle C_\mathrm{M} \gets \max_{C \in \mathcal{S}_\mathrm{M}} \mathrm{nfa}_C$\;
        \For{$C \in \mathcal{S}_\mathrm{M}$}{
            \lIf{$C \in \mathcal{S}_\mathrm{res}$ \textbf{and} $C \neq C_\mathrm{M}$}{
                $\mathcal{S}_\mathrm{res} \gets \mathcal{S}_\mathrm{res} \setminus \{ C \}$
            }
        }
    }

    \Return{$\mathcal{S}_\mathrm{res}$}\;
    \caption{Computation of maximal TMA \meps-meaningful boundaries in image $u$.}
    \label{algo:newMeaningfulBoundaries}
\end{algorithm}

Figure~\ref{fig:buildingSequence} shows an example of the reduction of the number of level lines caused by the maximality constraint. Parallel level lines are eliminated, leading to ``thinner edges .''

\begin{figure*}
  \centerline{
    \includegraphics[width=.3\textwidth]{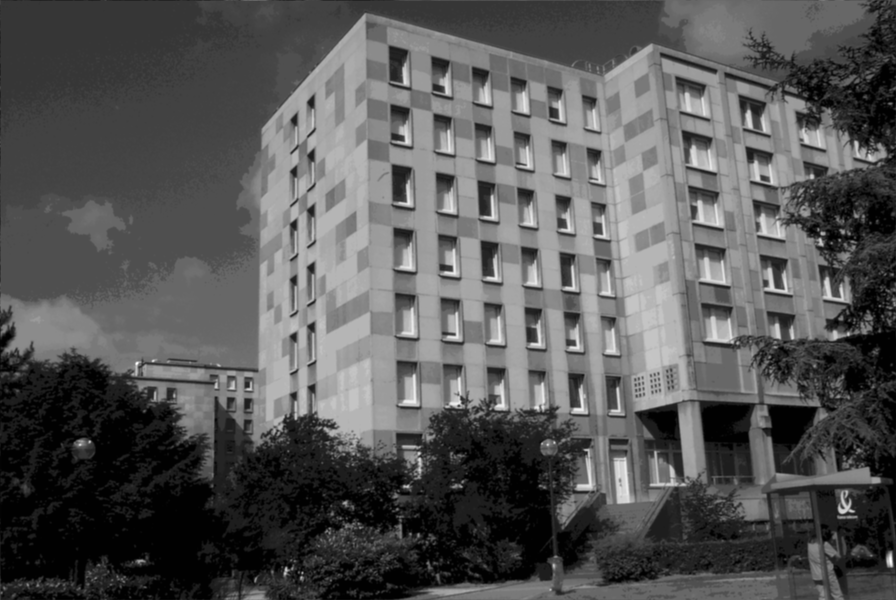}
    \hfil
    \fbox{\includegraphics[width=.3\textwidth]{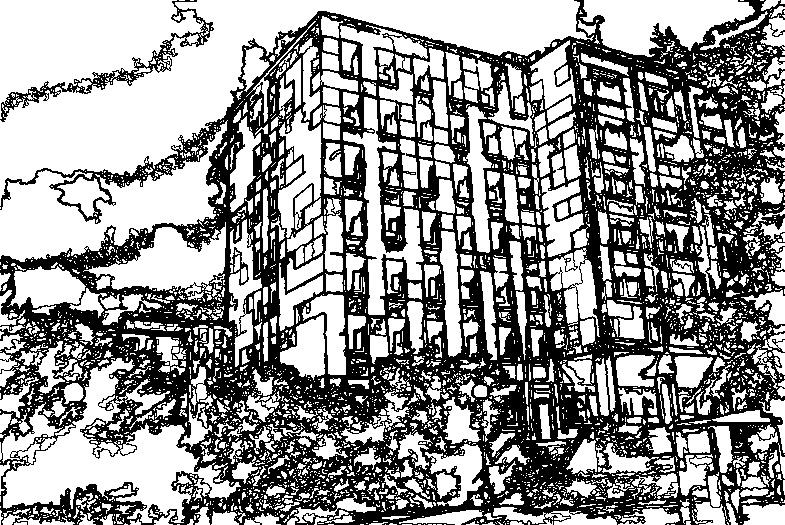}}
    \hfil
    \fbox{\includegraphics[width=.3\textwidth]{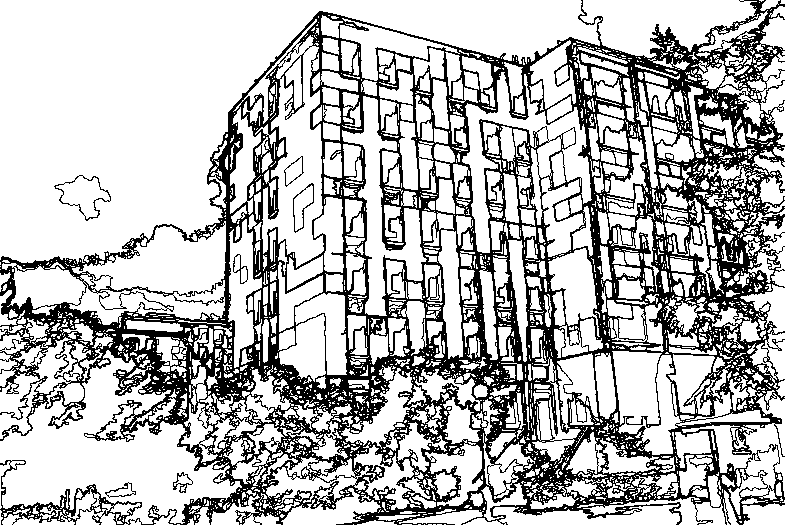}}
  }
  \caption{Effect of the maximality condition over the meaningful boundaries of an image. On the left, original image; on the center, DMM-MCB (8987 lines found); on the left, maximal DMM-MCB (517 lines found).}
  \label{fig:buildingSequence}
\end{figure*}

In the following, when we refer to meaningful boundaries, both in its DMM or TMA versions, we always compute maximal meaningful boundaries.

Notice that working with representative curves of monotone sections has some well-known dangers for
particular configurations that rarely occur in practice.  For example, if the input image contains successively nested objects of different increasing shades of gray, the proposed algorithm will detect only one object of each nested set.  Other definitions that explore local maxima of some saliency measure along the tree, such as MSER~\cite{matas02-mser}, can be used to correct this issue.

Desolneux~\etal~\cite{dmm01} also proposed an algorithm called meaningful edges which aims at detecting salient (i.e., well contrasted) pieces of level lines. TMA-MCB can be considered a hybrid of meaningful boundaries and meaningful edges and presents advantages from both algorithms.
Pieces of level lines belonging to different level lines cannot be compared, since they can have different positions and lengths. This means that we cannot compute maximal meaningful edges in the level lines tree. The TMA-MCB algorithm is able to detect partially salient curves while retaining compatibility with the maximality in the tree. On the other side, it is possible to compute maximal meaningful edges inside a given curve. TMA-MCB, as a provider of the supporting level lines, can be considered a first step towards finding meaningful edges that are maximal in both directions: in the tree, i.e., orthogonal to the curve, and along the curve. The extraction of the optimal pieces in a curve is discussed by Tepper \etal~\cite{tepper12ps}.

\subsection{Practical implications of the change in the NFA}
\label{sec:experimentsMeaningful}

We now address the following question: is there a fundamental difference in practice between DMM-MCB and TMA-MCB? The answer is that, given an image, this change implies noticeable differences in the detected curves. Indeed, TMA-MCB are more robust since the NFAs attained are much lower. Taking the minimum of probabilities is also more stable than taking the minimum on any punctual measure, see Figure~\ref{fig:comparisonNFAunderNoise}.

\begin{figure*}
  \centering
  \begin{tabular}{@{\hspace{0pt}}c@{\hspace{4pt}}c@{\hspace{12pt}}c@{\hspace{12pt}}c@{\hspace{0pt}}}
    & \textsc{image}
    & \textsc{dmm-mcb}
    & \textsc{tma-mcb}  \tabularnewline
    
    \raisebox{.4in}{\begin{sideways}\textsc{original}\end{sideways}} &
    \includegraphics[width=1.4in]{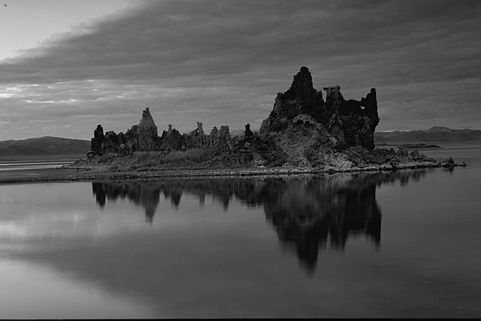} &
    \fbox{\includegraphics[width=1.4in]{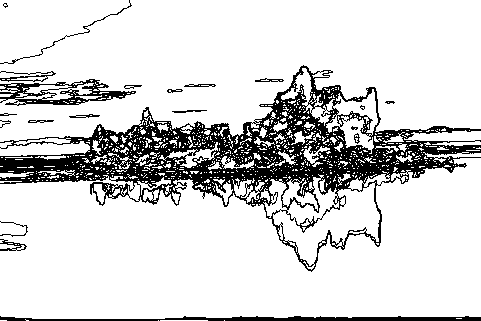}} &
    \fbox{\includegraphics[width=1.4in]{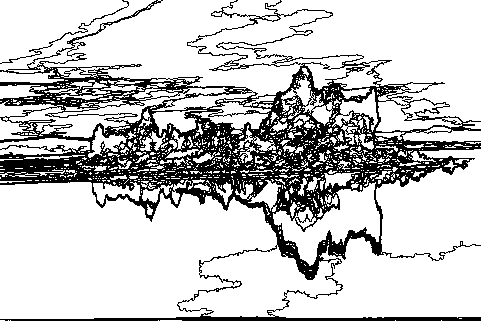}} \tabularnewline
    
    \vspace{4pt}
    \raisebox{.3in}{\begin{sideways}\textsc{original+noise}\end{sideways}} &
    \includegraphics[width=1.4in]{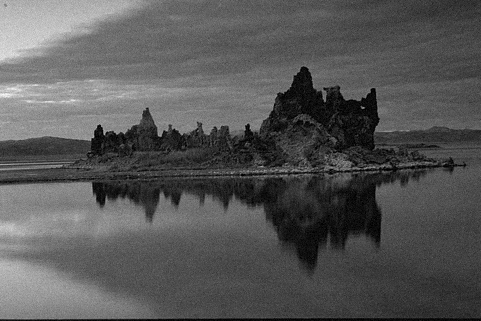} &
    \fbox{\includegraphics[width=1.4in]{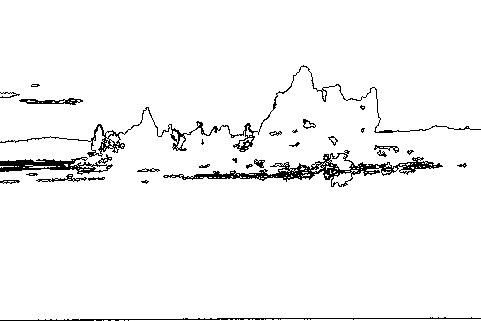}} &
    \fbox{\includegraphics[width=1.4in]{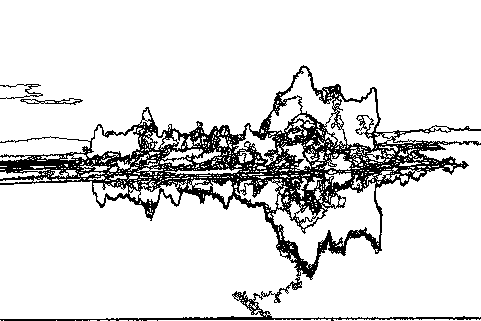}}
  \end{tabular}
  
  \caption{Noise contamination example. The image on the bottom left is contaminated by a small amount of noise. DMM-MCB takes a minimum of punctual measures, thus its result is affected. On the counterpart, result with TMA-MCB is less affected, as it deals with probabilities. Notice that here no smoothing is performed previous to detection, contrarily to the original implementation of the meaningful boundaries algorithm~\cite{dmm01}.}
  \label{fig:comparisonNFAunderNoise}
\end{figure*}

In some cases, by relaxing the meaningfulness threshold in DMM-MCB, that is setting $\eps > 1$, visually better results can be achieved. More level lines are kept, but at the expense of having lower confidence on them. The key advantage with TMA-MCB is that, for a given threshold for \meps, less visually salient level lines are discarded.

One of the possible arguments against TMA-MCB could be that it is no more than a shift of the threshold on the NFA of DMM-MCB. Specifically, that there exists a threshold $\eps' > \eps$ for which DMM-MCB and $\eps'$ would be the same as TMA-MCB and \meps. However, the assertion is clearly false, as shown in Figure~\ref{fig:comparisonNFA}.

\begin{figure*}
  \centering
  \begin{tabular}{@{\hspace{4pt}}c@{\hspace{4pt}}c@{\hspace{4pt}}c@{\hspace{4pt}}c@{\hspace{4pt}}}
    \textsc{image} &
    \textsc{dmm-mcb} \begin{footnotesize}($\eps=10^{-10}$)\end{footnotesize} &
    \textsc{dmm-mcb} \begin{footnotesize}($\eps=1$)\end{footnotesize} &
    \textsc{tma-mcb} \begin{footnotesize}($\eps=10^{-10}$)\end{footnotesize} \tabularnewline

    \includegraphics[width=1.1in]{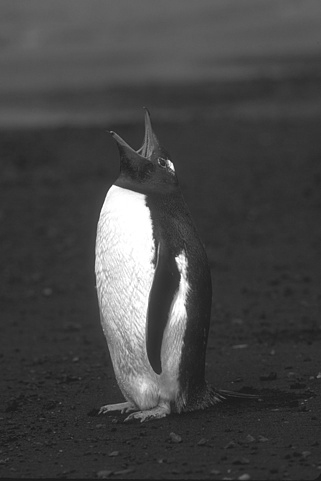} &
    \fbox{\includegraphics[width=1.1in]{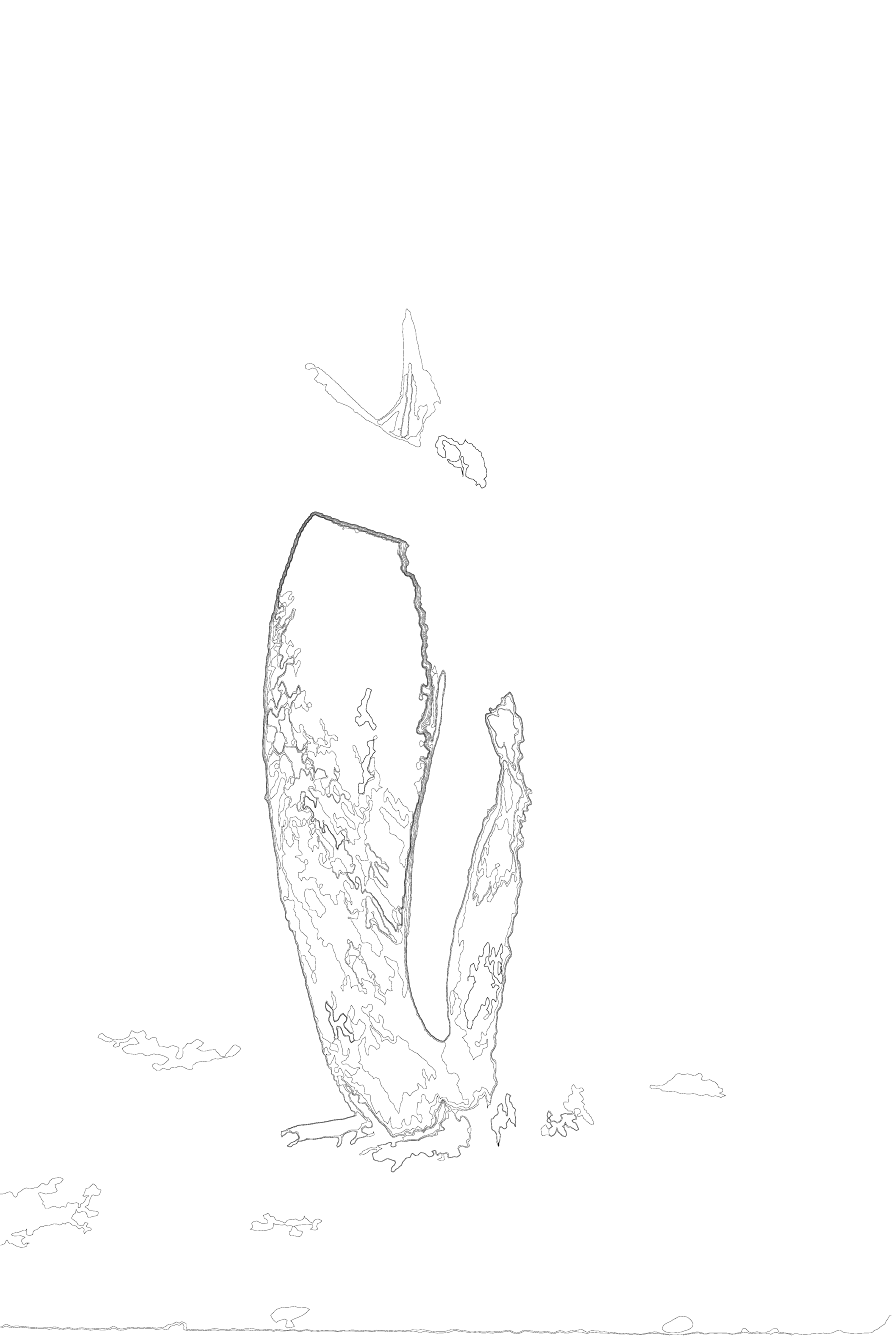}} &
    \fbox{\includegraphics[width=1.1in]{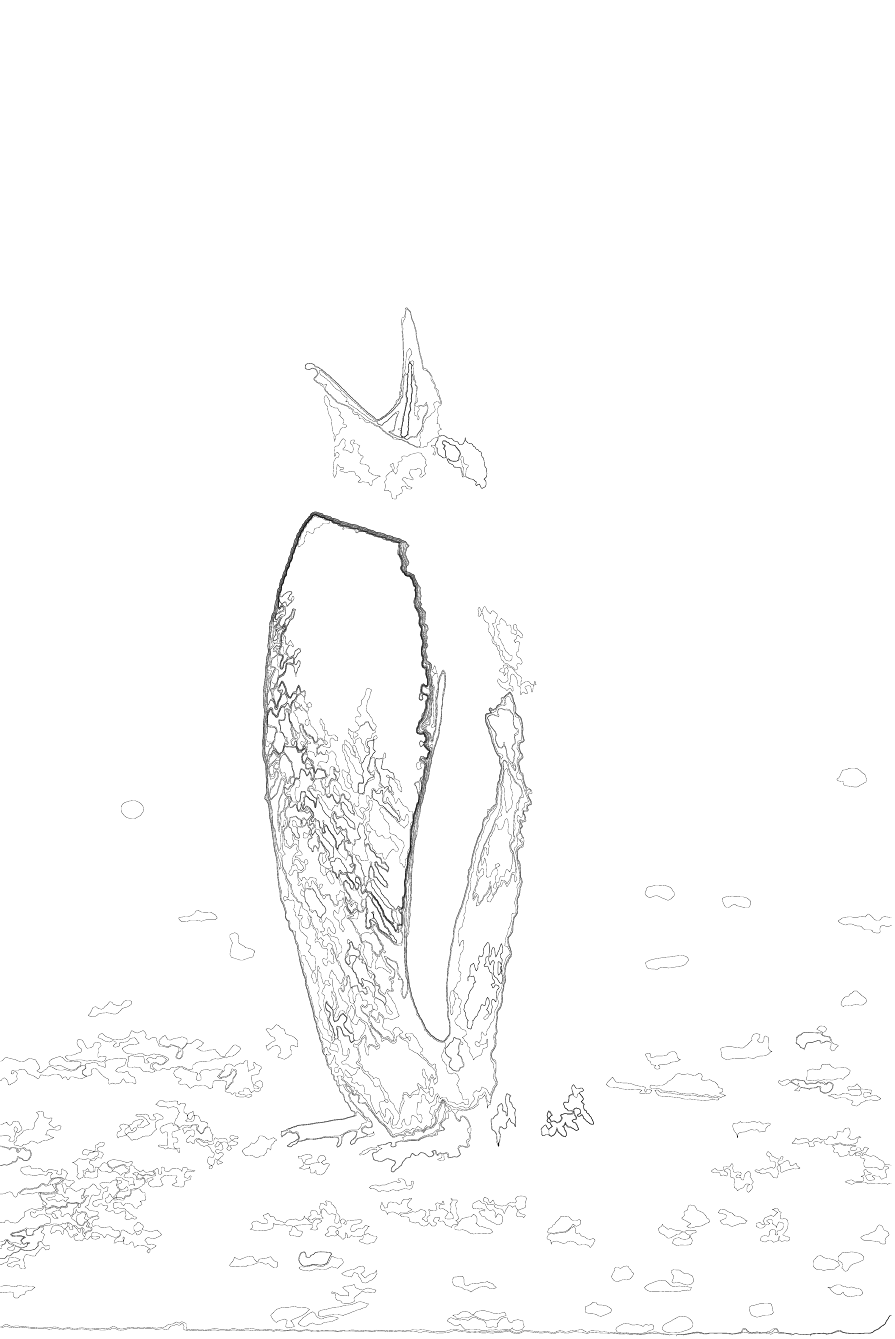}} &
    \fbox{\includegraphics[width=1.1in]{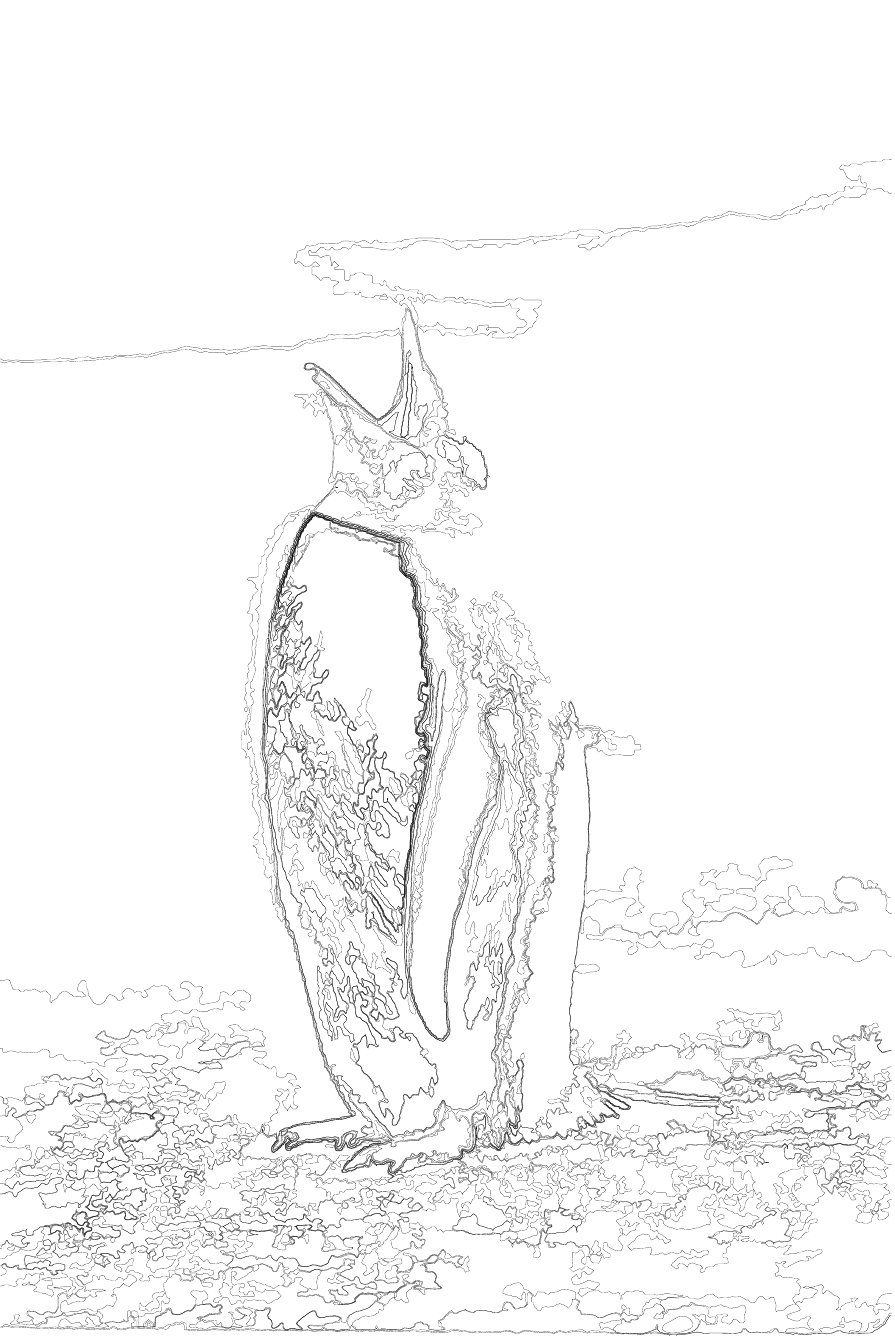}}
  \end{tabular}
  
  \caption{Definition~\ref{def:nfaContrastedCurve_k} is not merely a shift of the threshold on the NFA from  Definition~\ref{def:nfaContrastedCurve}: even relaxing the threshold to its limit ($\eps = 1$), the result with the old method remains roughly the same. A lot of structure missed with Definition~\ref{def:nfaContrastedCurve} is recovered with Definition~\ref{def:nfaContrastedCurve_k}.}
  \label{fig:comparisonNFA}
\end{figure*}

In many applications (e.g., scene reconstruction, image matching), underdetection is far more dangerous than overdetection. Losing structure is critical as it can end-up in a total failure. Detection noise can always be handled (or even tolerated) when the amount of noise does not occlude information, as in our case. TMA-MCB has an advantage over DMM-MB in this respect\footnote{Note however that overdetection might have as well a huge detrimental impact in other applications.}. This is experimentally checked in all examples, even if the difference is more striking in some examples than in others.

Figure~\ref{fig:epsilonEvolution} shows the numerical robustness attained with TMA-MCB. The visually important boundaries in the image have a much lower NFA with TMA-MCB than with DMM-MCB.

\begin{figure*}
  \centering
  \begin{tabular}{@{\hspace{0pt}}m{.1in}@{\hspace{4pt}}m{1.52in}@{\hspace{4pt}}m{1.52in}@{\hspace{4pt}}m{1.52in}@{\hspace{0pt}}}

        &&\includegraphics[width=1.4in]{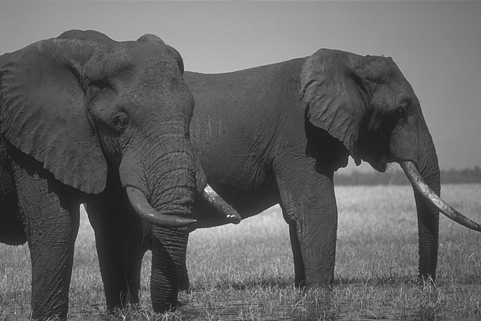}  \tabularnewline

        &
        \centering{\begin{footnotesize}$\eps=10^{-10}$\end{footnotesize}} &
        \centering{\begin{footnotesize}$\eps=10^{-50}$\end{footnotesize}} &
        \centering{\begin{footnotesize}$\eps=10^{-80}$\end{footnotesize}} \tabularnewline
        
        \begin{sideways}\textsc{dmm-mcb}\end{sideways} &
        \fbox{\includegraphics[width=1.4in]{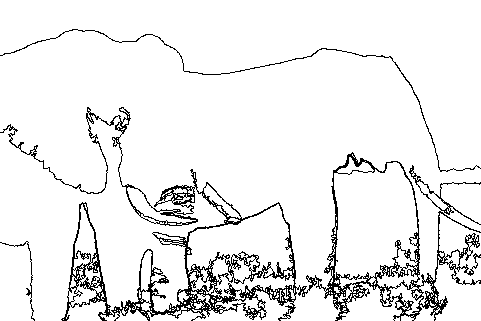}} &
        \fbox{\includegraphics[width=1.4in]{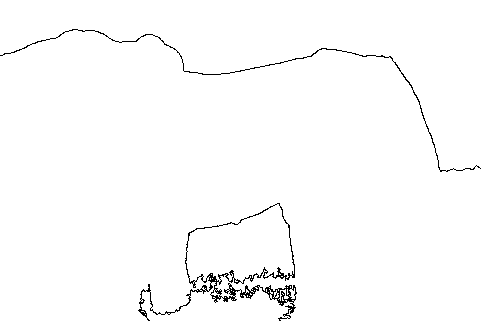}} &
        \fbox{\includegraphics[width=1.4in]{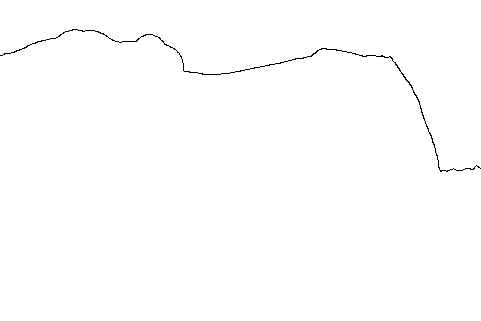}} \tabularnewline
        
        \begin{sideways}\textsc{tma-mcb}\end{sideways} &
        \fbox{\includegraphics[width=1.4in]{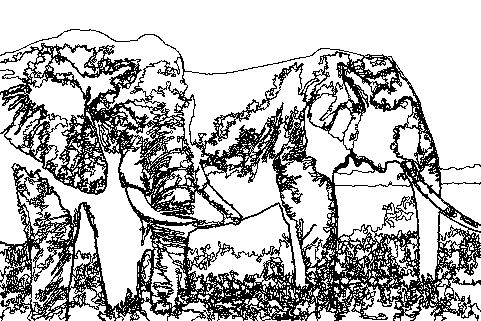}} &
        \fbox{\includegraphics[width=1.4in]{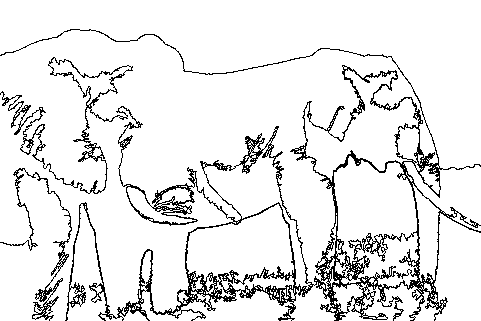}} &
        \fbox{\includegraphics[width=1.4in]{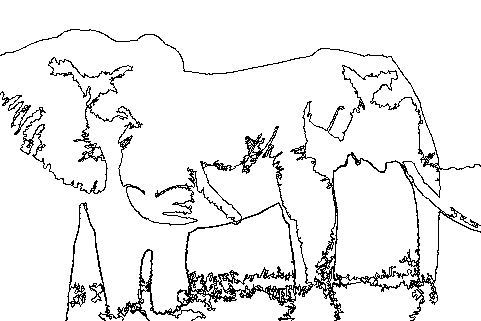}} \tabularnewline
    
      \end{tabular}
  
  \caption{
  Comparison between the stability of DMM-MCB and TMA-MCB. Much lower NFAs are attained with the latter in lines which are visually relevant.
  }
  \label{fig:epsilonEvolution}
\end{figure*}

\section{Combining contrast and good continuation}
\label{sec:meaningfulSmoothBoundaries}

As already stated, in natural images contrasted boundaries often locally coincide with object edges. Thus, they are also incidentally smooth.
Active contours~\cite{kass88} rely on this combination of good contrast and smoothness to provide well localized contours.
In this section, we reprise the work by Cao \etal~\cite{cao2005} and study the possible influence of smoothness in the a contrario detection process.
We conclude that regularity plays an important role in the improvement of the quality of the obtained detections.
This reinforcement phenomenon and the fact that each partial detector can detect most image edges prove a contrario that contrast and regularity are not independent in natural images.

Let $C$ be a rectifiable planar curve, parameterized by its length. Let $l$ be the length of $C$ and $x = C(\tau) \in C$. With no loss of generality, we assume that $\tau = 0$.
\begin{definition}
  \textnormal{(Cao~\etal~\cite{cao2005})}
  Let $s > 0$ be a fixed positive value such that $2s < l$.
  We call regularity of $C$ at $x$ (at scale $s$) the quantity
  \begin{equation}
    R_s (x) = \frac{\max (|x - C(-s)|, |x - C(s)|)}{s}
  \end{equation}
  where $|x_i - x_j|$ represents the Euclidean distance between $x_i$ and $x_j$.
\end{definition}
Figure~\ref{fig:defregularity} visually explains the pertinence of this definition. Only when one of the subcurves $C((-s, 0))$ or $C((0, s))$ is a line segment, $R_s (x) = 1$; in all other cases $R_s (x) < 1$. When $s$ is small enough, regularity is inversely proportional to the curve's curvature around $x$~\cite{cao2005}.

\begin{figure}
  \centering
  \includegraphics[width=150pt]{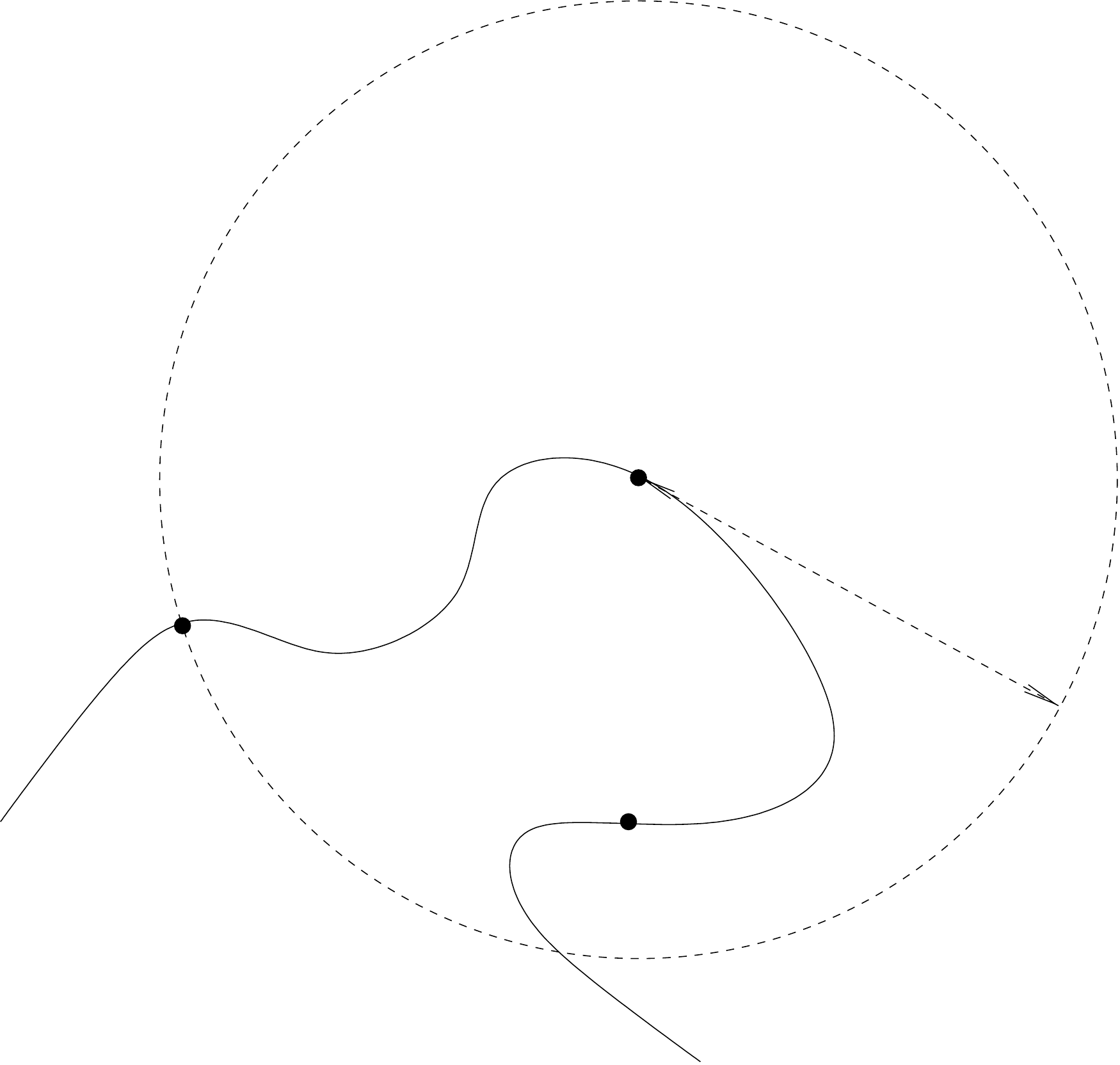}
  \put(-80, 85){$x=C(0)$}
  \put(-125, 65){$C(-s)$}
  \put(-80, 40){$C(s)$}
  \put(-48, 70){$s \times R_s(x)$}
  
  \caption{Reproduced from the work by Cao~\etal~\cite{cao2005}. The regularity at $x$ is obtained by comparing the radius of the circle with $s$. The radius is equal to $s$ if and only if the curve is a straight line. If the curve has a large curvature, the radius will be small compared to $s$.}
  \label{fig:defregularity}
\end{figure}

The question about the choice of $s$ arises naturally and was studied in detail by Cao~\etal~\cite{cao2005} and Mus\'e~\cite{museThesis}. We will limit ourselves to state that a larger value of $s$ (thus at less local scale of analysis) is more robust to noise.
On the other side, $s$ should not be too large either. In practice, and following Cao~\etal~\cite{cao2005} one may safely set $s=5$, which is the value we use in our experiments.

Let us denote by $H_s (r)$ the distribution of the regularity in white noise level lines, i.e.,
\begin{equation}
  H_s (r) = P \Big( R_s (x) > r,\, x \in C,\, C \text{ is a white noise level line} \Big) \text{,}
\end{equation}
which depends only on $s$ and can be empirically estimated.

Again, the curve detection algorithm consists in adequately rejecting the null hypothesis \emph{$\Hy_0$: the values of $|R_s|$ are i.i.d., extracted from a noise image}. We assume that, in the background model, contrast and regularity are independent.

Let us forget for the moment the issues associated with the use of extremal (the minimum) statistics, discussed in Section~\ref{sec:meaningfulBoundaries}.

\begin{definition}
  Let $C$ be a level line in a finite set $\mathcal{C}$ of $N_{ll}$ level lines of image $u$. Let
  \begin{align*}
    \mu &= \min_{x \in C} |Du|(x) \text{,}\\
    \rho &= \min_{x \in C} R_s(x)
  \end{align*}
  be respectively the minimal quantized contrast and regularity along $C$. 
  The level line $C$ is a DMM \meps-meaningful regular boundary (DMM-MRB) if
  \begin{equation}
    \NFA^{R} (C) \stackrel{\mathrm{def}}{=} N_{ll}\ H_s (\rho)^{l / 2 s} < \eps \text{.}
  \end{equation}
  The level line $C$ is a DMM \meps-meaningful contrasted regular boundary (DMM-MRB) if
  \begin{equation}
    \NFA^{\mathrm{CR}} (C) \stackrel{\mathrm{def}}{=} N_{ll} \max \left( H_c (\mu)^{l},\ H_s (\rho)^{l/s} \right) < \eps \text{.}
  \end{equation}
  \label{def:nfaSmoothCurve}
\end{definition}

\begin{remark}
  Cao~\etal~\cite{cao08theory} provided the following definition of meaningful contrasted regular boundaries:
  \begin{equation}
    \NFA^{\mathrm{CR}} (C) \stackrel{\mathrm{def}}{=} N_{ll}\ H_c (\mu)^{l/2}\ H_s (\rho)^{l / 2 s} < \eps \text{.}
  \end{equation}
  Unfortunately, they do not prove that the expected number of \meps-meaningful contrasted regular boundaries in a finite set of random curves is smaller than \meps. This fact is annoying since the threshold \meps is emptied of meaning. It is not by any means an easy proof and we have not found a solution yet. However, we have proven that by slightly changing their definition in the following manner
\begin{equation}
  \NFA^{\mathrm{CR}} (C) \stackrel{\mathrm{def}}{=} N_{ll}\ H_c (\mu)^{{l}^2 / 2 s}\ H_s (\rho)^{{l}^2 / 2 s} \text{.}
  \label{eq:nfaSmoothCurve2}
\end{equation}
a proof can be built~\cite{tepperPhD}.

Although theoretically sound, meaningful contrasted regular boundaries defined by Equation~\ref{eq:nfaSmoothCurve2} do not provide satisfactory results. This is a consequence of using the exponent $l^2$. With respect to DMM-MCB (Definition~\ref{def:nfaContrastedCurve}, p.~\pageref{def:nfaContrastedCurve}) and even if the regularity term has high probability (say one), raising the contrast term to a much larger power will shift the NFA of all curves towards zero. Irregular curves that were not meaningful by their contrast, might become meaningful regular boundaries. This is certainly an unwanted side effect.
\leavevmode\unskip\penalty9999 \hbox{}\nobreak\hfill \quad\hbox{$\triangle$}
\end{remark}

Definition~\ref{def:nfaSmoothCurve} exhibits some interesting properties:
\begin{itemize}
  \item A contrasted but irregular curve will not be detected;
  \item A regular but non-contrasted curve will not be detected;
  \item An irregular and non-contrasted curve will not be detected;
  \item A regular and contrasted curve will be detected.
\end{itemize}
Both gestalts, i.e., contrast and good continuation, interact in a novel way: instead of cooperating by reinforcing each other, as in Equation~\ref{eq:nfaSmoothCurve2}, they compete for the ``control'' of the curve. As the exponent in the contrast term is greater than the exponent in the regularity term ($l > l/s$), the contrast term will in general dominate the detections and the regularity will act as an additional sanity check.

The shifting phenomenon mentioned in the above remark will still be present. However, $2l$ is much less aggressive than $l^2$ and its effect will be doubly mitigated: (1) since $l \gg 2$ and (2) because of the controlling effect of using the maximum.

Since TMA-MCB is a relaxed version of DMM-MCB, we profit from such knowledge and also relax the definition of meaningful contrasted regular boundaries. This relaxation will prove particularly relevant for the contrasted regular case.
\begin{definition}
  \label{def:nfaContrastedSmoothCurve_k}
  Let $\mathcal{C}$ be a finite set of $N_{ll}$ level lines of $u$.
  A level line $C \in \mathcal{C}$ is a TMA \meps-meaningful contrasted regular boundary (TMA-MCRB) if
  \begin{equation}
    \NFA_{K}^{\mathrm{CR}}(C) \stackrel{\mathrm{def}}{=} N_{ll}\ K_c\ K_s
    \max \left(
    \begin{split}
    \min_{k < K_c} I_c (C, k)^2 \\
    \min_{k < K_s} I_s (C, k)^2
    \end{split}
    \right) < \eps \text{,}
  \end{equation}
  where
  \begin{align*}
    I_c (C, k) &= \widetilde{\bintail} (n \cdot \lsn{2}, k \cdot \lsn{2}; H_c(\mu_k)) \text{,}\\
    I_s (C, k) &= \widetilde{\bintail} (n \cdot \lsn{2s}, k \cdot \lsn{2s}; H_s(\rho_{k})) \text{,}
  \end{align*}
  and $K_c$ and $K_s$ are parameters of the algorithm. This number is called number of false alarms (NFA) of $C$.
\end{definition}

Here $K_c$ and $K_s$ have the same meaning as $K$ in Definition~\ref{def:nfaContrastedCurve_k} and they are also set as a percentile of the total number of points in the curve.

\begin{proposition}
  The expected number of TMA \meps-mean\-ing\-ful contrasted regular boundaries in a finite set $E$ of random curves is smaller than \meps.
\end{proposition}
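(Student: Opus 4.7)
The plan is to reduce this proposition to the contrast-only case already proven in Proposition~\ref{prop:nfaContrastedCurve_k}, and then glue the two marginal bounds together via the independence between contrast and regularity assumed under $\Hy_0$. By linearity of expectation, the expected number of TMA-MCRB detections equals $\sum_{C\in E} P(\NFA_{K}^{\mathrm{CR}}(C)<\eps)$, so it suffices to show $P(\NFA_{K}^{\mathrm{CR}}(C)<\eps)\le \eps/N_{ll}$ for a single random curve and then sum over the $N_{ll}$ curves.

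First I would rewrite the event $\NFA_{K}^{\mathrm{CR}}(C)<\eps$ using the definition. Since the NFA involves a $\max$ of the two squared minima scaled by $N_{ll}K_cK_s$, this event is equivalent to the simultaneous occurrence of
\begin{align*}
A_c &: \min_{k<K_c} I_c(C,k) < \sqrt{\eps/(N_{ll}K_cK_s)},\\
A_s &: \min_{k<K_s} I_s(C,k) < \sqrt{\eps/(N_{ll}K_cK_s)}.
\end{align*}
Under the background hypothesis, the contrast values $|Du|(x_i)$ and the regularity values $R_s(x_i)$ are independent, so $P(A_c\cap A_s) = P(A_c)\,P(A_s)$.

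Next, I would invoke the argument of Proposition~\ref{prop:nfaContrastedCurve_k} (Appendix~\ref{sec:proofNFA_C}) as a black box, applied once to contrast and once to regularity. That proof yields, for a random curve in the background model, a marginal bound of the form $P(\min_{k<K}\widetilde{\bintail}(n\lsn{2},k\lsn{2};H(\mu_k))<\delta)\le K\delta$; the derivation relies only on the facts that, under $\Hy_0$, the $H$-values of the sorted samples are order statistics of i.i.d.\ uniforms on $[0,1]$ and that a union bound over $k$ costs a factor $K$. The same derivation transfers verbatim to the regularity feature, giving the analogous bound with $H_s$ and $K_s$ in place of $H_c$ and $K_c$. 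Substituting $\delta=\sqrt{\eps/(N_{ll}K_cK_s)}$ into each marginal bound and multiplying yields $P(A_c)P(A_s)\le K_cK_s\cdot\eps/(N_{ll}K_cK_s)=\eps/N_{ll}$, and summing over $N_{ll}$ curves closes the proof. The squaring of each minimum inside the $\max$ in Definition~\ref{def:nfaContrastedSmoothCurve_k} is precisely the calibration that makes the two sub-budgets multiply out to $\eps/N_{ll}$: each sub-test gets a $\sqrt{\phantom{x}}$-sized threshold, and independence recombines them into the right $\eps$.

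The main subtlety—and the step I expect to require most care—is the transfer of the contrast-only argument to the regularity feature. One needs the samples $R_s(x_i)$ along a curve to behave, under $\Hy_0$, as i.i.d.\ variables with law $H_s$, so that the Beta/order-statistic structure exploited in Appendix~\ref{sec:proofNFA_C} is preserved. This mirrors the independence assumption used for contrast (justified by the sampling one-out-of-two scheme at the top of Section~\ref{sec:meaningfulBoundaries}) and is consistent with the background model declared just before Definition~\ref{def:nfaSmoothCurve}, but it has to be invoked explicitly. Everything else is bookkeeping identical to the appendix, augmented only by the contrast/regularity independence that turns the conjunction of the two marginal events into a clean product of probabilities.
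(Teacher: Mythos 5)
Your proof is correct and follows essentially the same route as the paper's: both rewrite the event $\NFA_{K}^{\mathrm{CR}}(C)<\eps$ as the simultaneous occurrence of the contrast and regularity threshold events (a maximum is below a threshold iff both terms are), factor the probability using the assumed independence of contrast and regularity under $\Hy_0$, bound each marginal by the order-statistic/union-bound argument of the contrast-only proposition, and let the squaring in Definition~\ref{def:nfaContrastedSmoothCurve_k} calibrate the two thresholds so the product comes out to $\eps/N_{ll}$ per curve. If anything, your explicit accounting of the union-bound factors ($P(A_c)\le K_c\delta$, $P(A_s)\le K_s\delta$) is tidier than the paper's own bookkeeping, which conditions on $N$ and the curve lengths and absorbs those factors into a redundant triple sum over $i$, $k$, $k'$.
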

This very important proof is given in Appendix~\ref{sec:proofNFA_CR} to avoid breaking the flow of the discussion.

For completeness, we provide the following definition.
\begin{definition}
  \label{def:nfaSmoothCurve_k}
  Let $\mathcal{C}$ be a finite set of $N_{ll}$ level lines of $u$. A level line $C \in \mathcal{C}$ is a TMA \meps-meaningful regular boundary (TMA-MRB) if
  \begin{equation}
    \NFA_K^{\mathrm{R}}(C) \stackrel{\mathrm{def}}{=} \\ N_{ll}\ K_s \min_{k < K_s} \widetilde{\bintail} (n \cdot \lsn{2s}, k \cdot \lsn{2s}; H_s(\rho_{k})) < \eps \text{,}
  \end{equation}
  and $K_s$ is a parameter of the algorithm. This number is called number of false alarms (NFA) of $C$.
\end{definition}

As a sanity check, we apply the DMM-MCRB and TMA-MCRB algorithms to an image of white noise. We would expect that when $\eps=1$ the number of detections is in average lower than 1. This is checked in Figure~\ref{fig:whiteNoise_CR}, where the number of detections is actually zero. Even when $\eps=1000$, the number of detections remain negligible.

\begin{figure*}
    \centering
%
    \includegraphics[width=.7\textwidth]{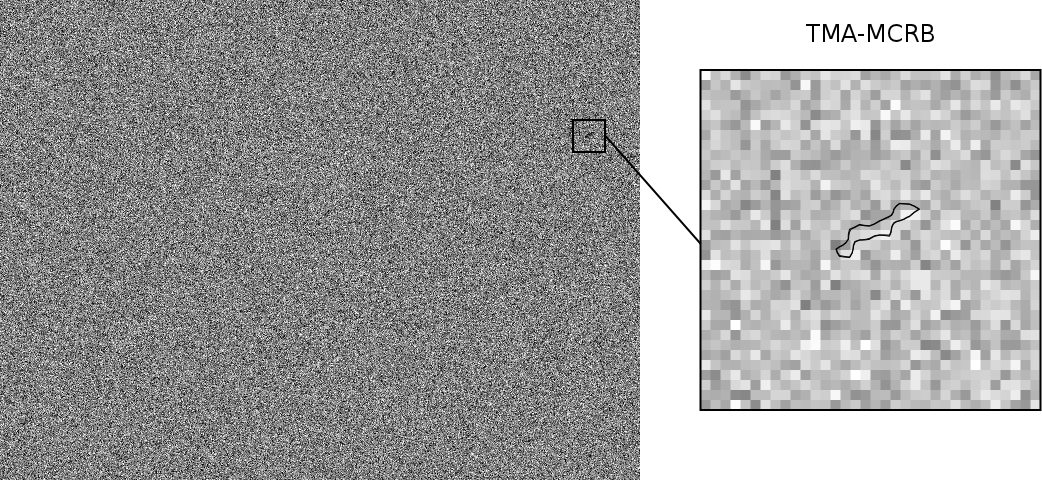}
    \caption{There are 4845004 level lines in the left image of a Gaussian noise with standard deviation 50. By setting $\eps=1000$, DMM-MCRB  detects zero boundaries and TMA-MCRB detects two boundaries forming a packet (right detail). At $\eps=1$, both methods detect zero boundaries.}
    \label{fig:whiteNoise_CR}
\end{figure*}

An immediate objection to the use of regularity might be: since high curvature points are often regarded as very meaningful perceptually~\cite{attneave54}, why such an emphasis in discarding them? The answer is also immediate: we detect \emph{partially} contrasted and regular level lines. Hence, a curve containing a relatively small number of high curvature points will be detected by TMA-MCRB but not by DMM-MCRB. In this scenario, these high curvature points will become more surprising, because of their seldomness, and thus meaningful.

The procedure for finding maximal meaningful regular or contrasted regular boundaries is similar to Algorithm~\ref{algo:newMeaningfulBoundaries}, replacing $\NFA$ by $\NFA_K^\mathrm{R}$ or $\NFA_K^\mathrm{CR}$, respectively.

\subsection{Discussion}

We will now examine the results of the proposed competition between contrast and good continuation.

The benefits of using meaningful contrasted regular boundaries are clear in Figure~\ref{fig:contrastedVSregular}. In both examples, only using contrast produces an overdetection (level lines are detected in areas with texture, e.g. the vegetation on the left, or exhibiting a slight gradient, e.g. the sky and the dome on the right) while only using good continuation produces an underdetection (e.g. the bridge on the left and the bell on the right). The combination of both gestalts corrects the issues by keeping the best from both worlds: most undesired level lines disappear (e.g. the vegetation on the left and the sky on the right) while the desired ones are kept (e.g. the bridge on the left and the bell on the right).

\begin{figure*}
  \centering
  \begin{tabular}{@{\hspace{0pt}}m{.1in}@{\hspace{4pt}}m{.4\textwidth}@{\hspace{4pt}}m{.4\textwidth}@{\hspace{0pt}}}
    \begin{sideways}\textsc{image}\end{sideways} &
    \includegraphics[width=.4\textwidth]{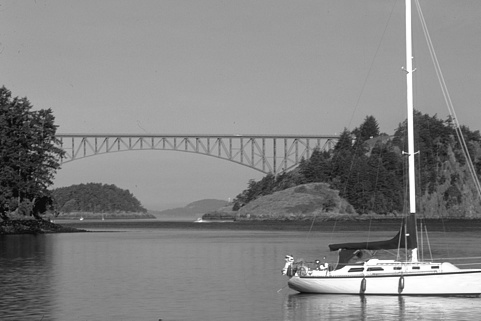} &
    \includegraphics[width=.4\textwidth]{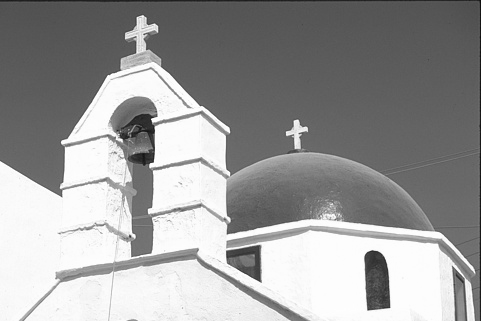} \tabularnewline
    
    \begin{sideways}\textsc{tma-mcb}\end{sideways} &
    \fbox{\includegraphics[width=.4\textwidth]{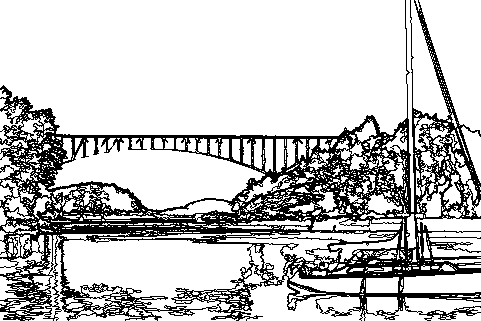}} &
    \fbox{\includegraphics[width=.4\textwidth]{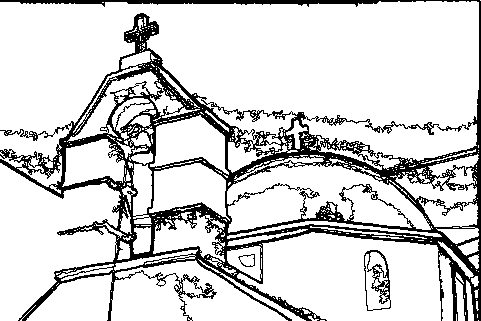}} \tabularnewline

    \begin{sideways}\textsc{tma-mrb}\end{sideways} &
    \fbox{\includegraphics[width=.4\textwidth]{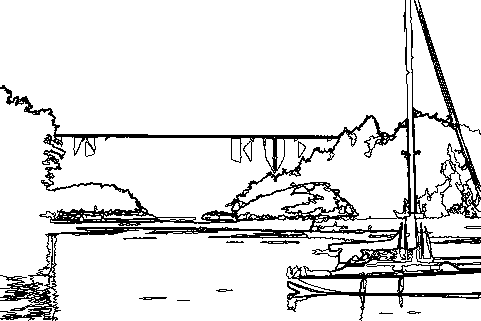}} &
    \fbox{\includegraphics[width=.4\textwidth]{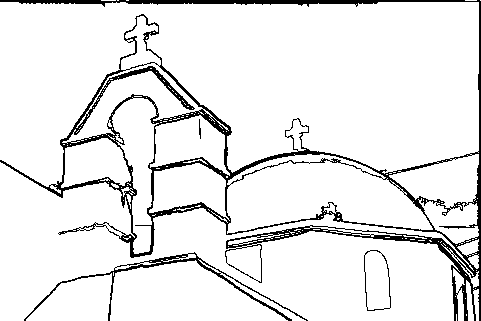}} \tabularnewline

    \begin{sideways}\textsc{tma-mcrb}\end{sideways} &
    \fbox{\includegraphics[width=.4\textwidth]{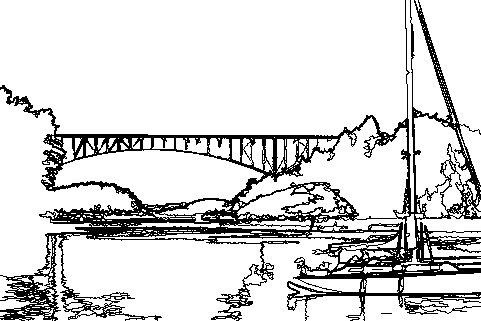}} &
    \fbox{\includegraphics[width=.4\textwidth]{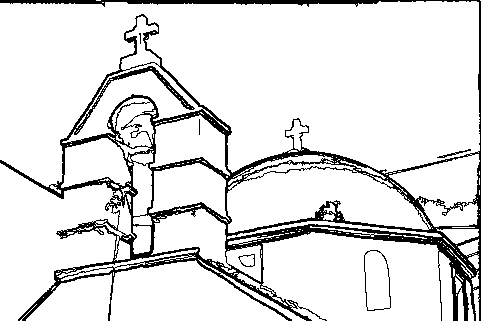}} \tabularnewline
  \end{tabular}

  \caption{Comparison of TMA-MCB (Definition~\ref{def:nfaContrastedCurve_k}), TMA-MRB (Definition~\ref{def:nfaSmoothCurve_k}), and TMA-MCRB (Definition~\ref{def:nfaContrastedSmoothCurve_k}).}
  \label{fig:contrastedVSregular}
\end{figure*}

Although more complicated to analyze, Figure~\ref{fig:star-wars} further supports our claims. See the detail on Harrison Ford's sleeve: it is completely lost by using contrast, partially recovered by using good continuation and well recovered by combining them.

It is important to point out that in general, good continuation has a predominant effect over contrast. In the depicted examples, meaningful contrasted boundaries have lower NFAs than meaningful smooth ones. This explains the visual effect that we perceive when looking at the results: contrasted regular boundaries are basically regular boundaries reinforced by some contrasted parts.

\begin{figure*}
  \centering
  \begin{tabular}{@{\hspace{0pt}}c@{\hspace{12pt}}c@{\hspace{0pt}}}
    \textsc{image} & \textsc{tma-mcb} \tabularnewline
    \includegraphics[width=.4\textwidth]{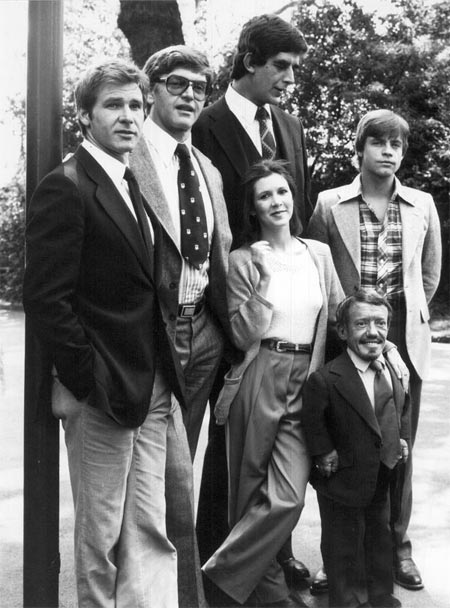} &
    \fbox{\includegraphics[width=.4\textwidth]{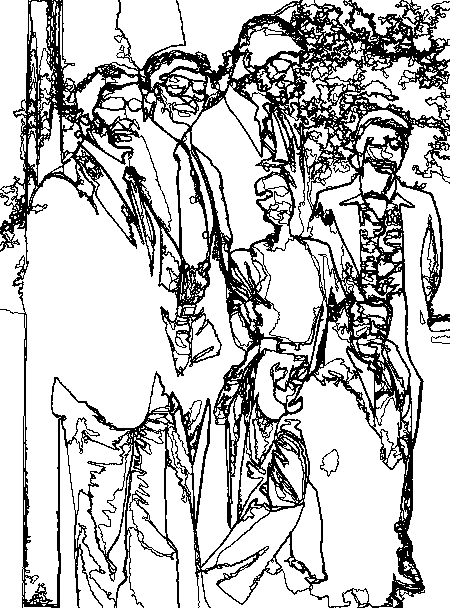}} \tabularnewline

    \textsc{tma-mrb} & \textsc{tma-mcrb} \tabularnewline
    \fbox{\includegraphics[width=.4\textwidth]{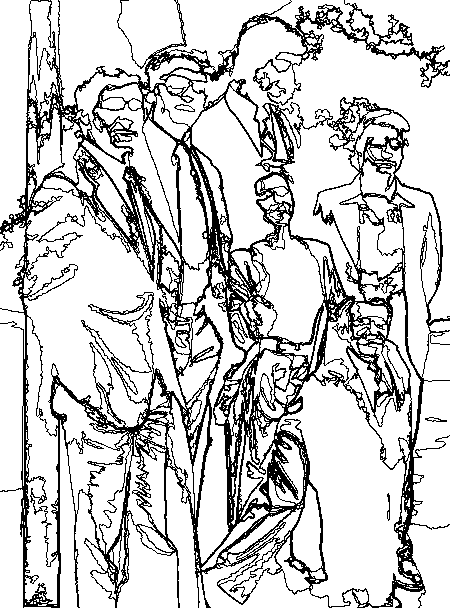}} &
    \fbox{\includegraphics[width=.4\textwidth]{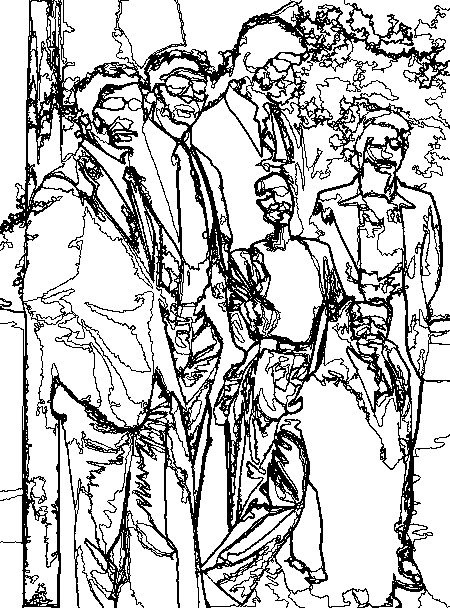}} \tabularnewline
  \end{tabular}

  \caption{Comparison of TMA-MCB (Definition~\ref{def:nfaContrastedCurve_k}), TMA-MRB (Definition~\ref{def:nfaSmoothCurve_k}), and TMA-MCRB (Definition~\ref{def:nfaContrastedSmoothCurve_k}).}
  \label{fig:star-wars}
\end{figure*}

The example in Figure~\ref{fig:watchmen} is a real scene, extremely complicated from the edge detection point of view. In any case, all results are globally satisfactory. Noticeable differences between the methods are perceived by looking at the signs containing letters.

\begin{figure*}
  \centering
  \begin{tabular}{@{\hspace{0pt}}c@{\hspace{4pt}}c@{\hspace{0pt}}}
    \textsc{image} & \textsc{tma-mcb} \tabularnewline
    \includegraphics[width=2.3in]{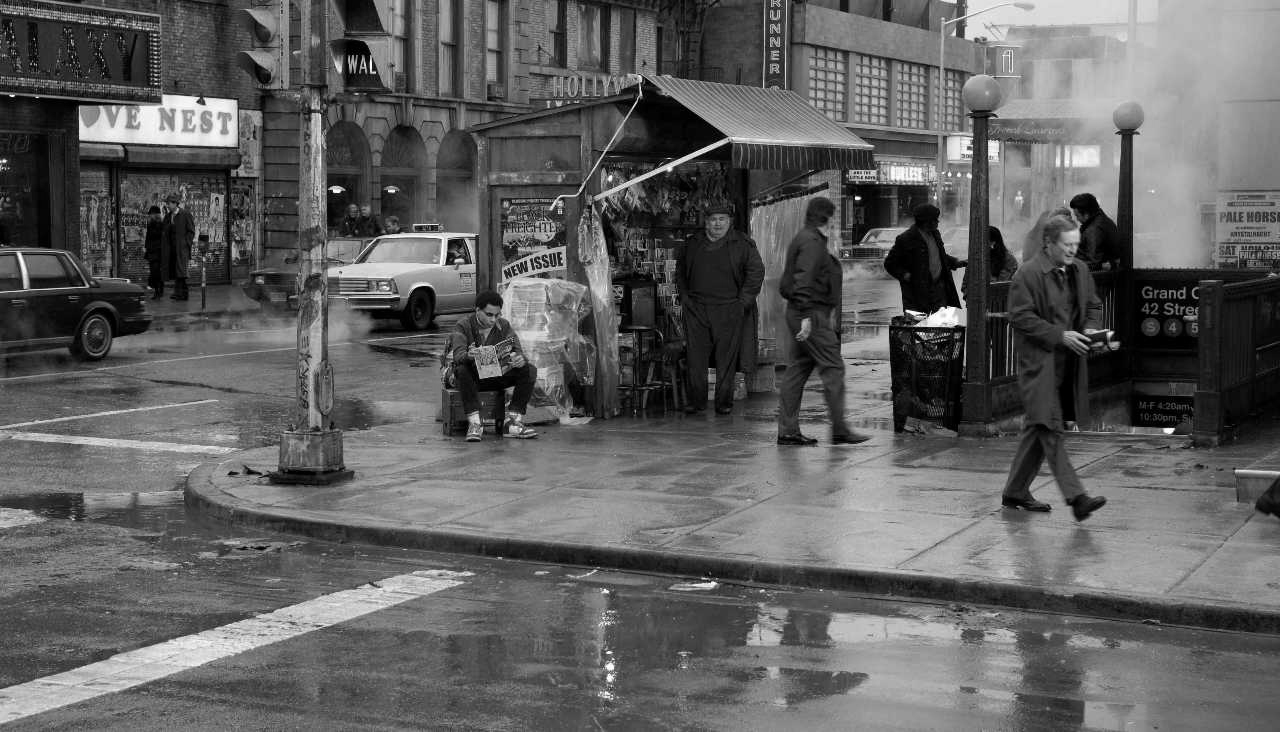} &
    \fbox{\includegraphics[width=2.3in]{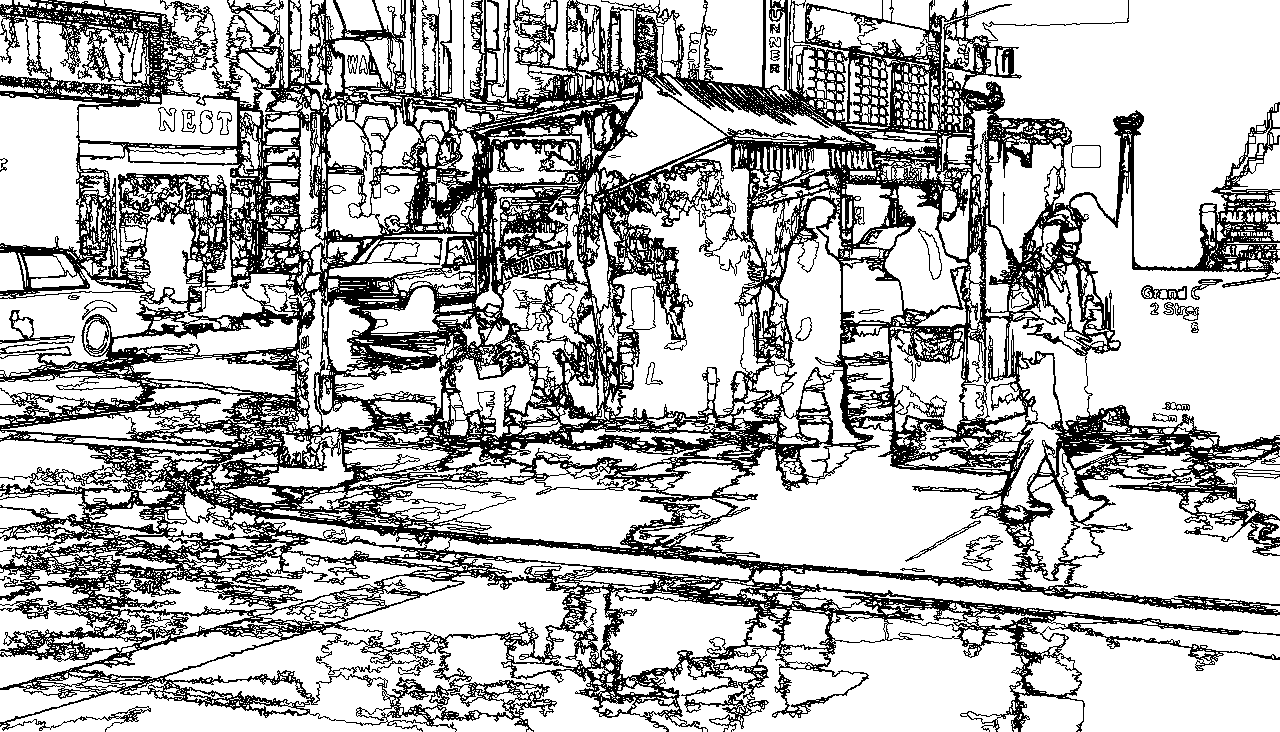}} \tabularnewline
    
    \textsc{tma-mrb} & \textsc{tma-mcrb} \tabularnewline
    \fbox{\includegraphics[width=2.3in]{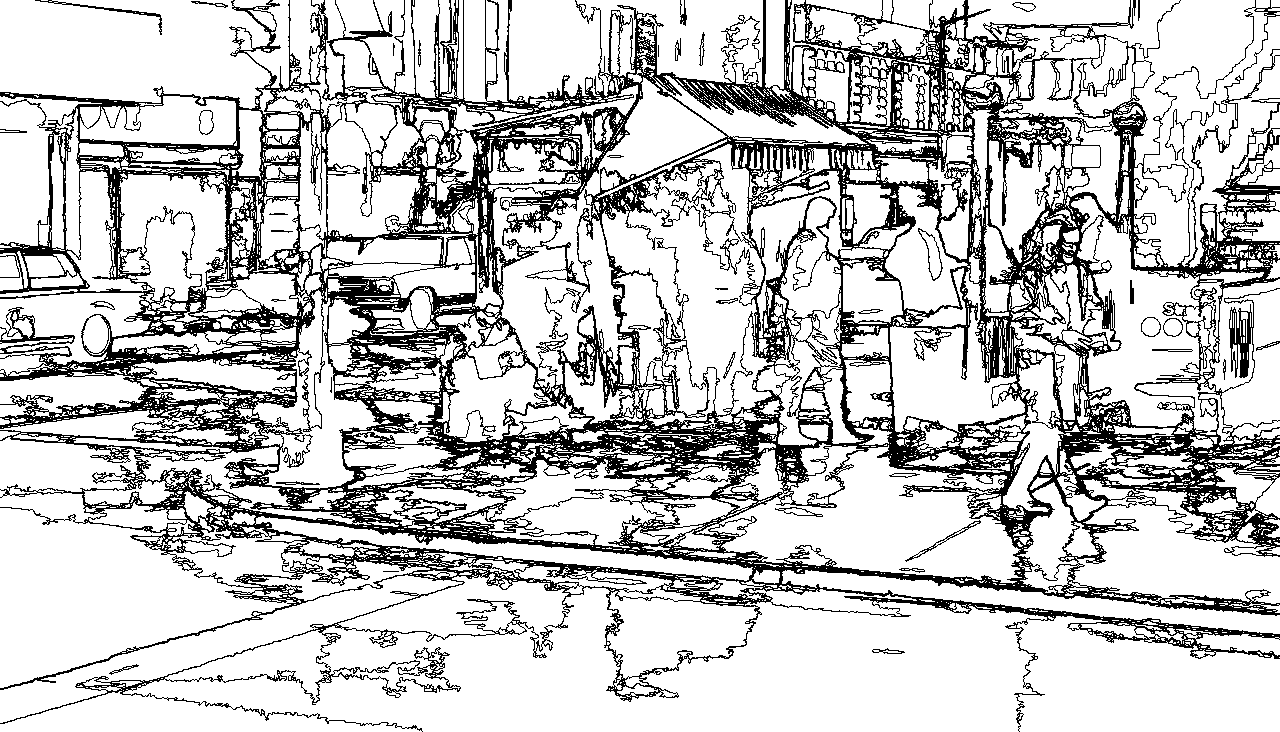}} &
    \fbox{\includegraphics[width=2.3in]{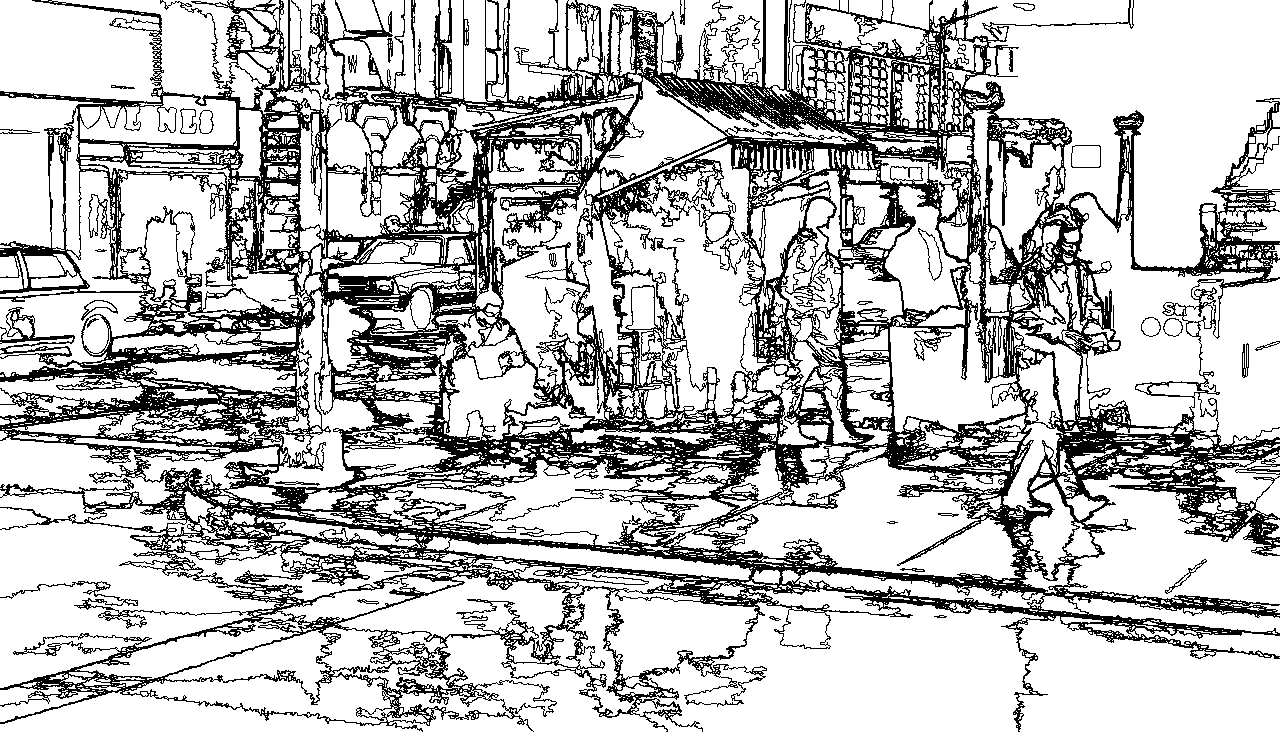}} \tabularnewline
  \end{tabular}
  
  \caption{Comparison of TMA-MCB (Definition~\ref{def:nfaContrastedCurve_k}), TMA-MRB (Definition~\ref{def:nfaSmoothCurve_k}), and TMA-MCRB (Definition~\ref{def:nfaContrastedSmoothCurve_k}).}
  \label{fig:watchmen}
\end{figure*}

We lastly compare TMA-MCRB with DMM-MCRB in Figure~\ref{fig:contrastedRegularDMMvsTMA}. As already stated TMA-MCB often detects more structure than DMM-MCB (second and third rows). This effect is amplified in DMM-MCRB, and can lead to severe underdetections (fourth row). On the other hand, the relaxation present in the TMA version allows to recover the structure more faithfully(fifth row), albeit some mild overdetections.

\begin{figure*}
  \centering
  \begin{tabular}{@{\hspace{0pt}}m{.08in}@{\hspace{4pt}}m{.255\textwidth}@{\hspace{4pt}}m{.255\textwidth}@{\hspace{4pt}}m{.255\textwidth}@{\hspace{4pt}}m{.15\textwidth}@{\hspace{0pt}}}
    \begin{sideways}\textsc{image}\end{sideways} &
    \includegraphics[width=.255\textwidth]{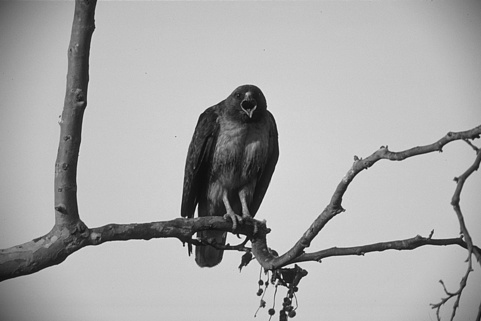} &
    \includegraphics[width=.255\textwidth]{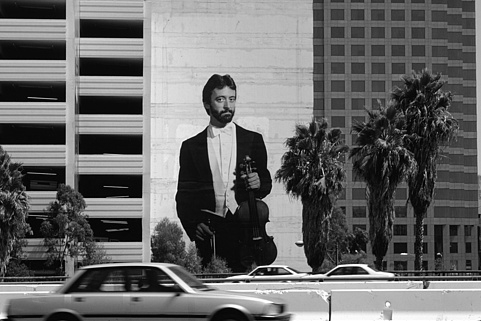} &
    \includegraphics[width=.255\textwidth]{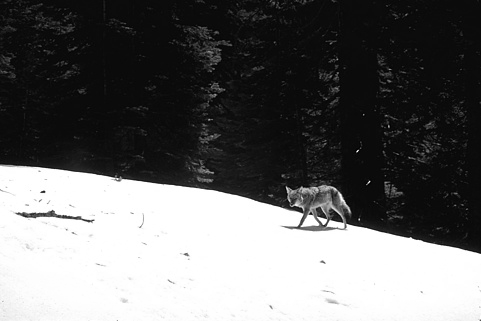} &
    \includegraphics[width=.15\textwidth]{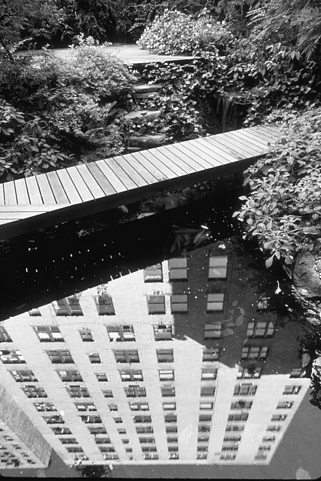} \tabularnewline
    
    \begin{sideways}\textsc{dmm-mcb}\end{sideways} &
    \fbox{\includegraphics[width=.255\textwidth]{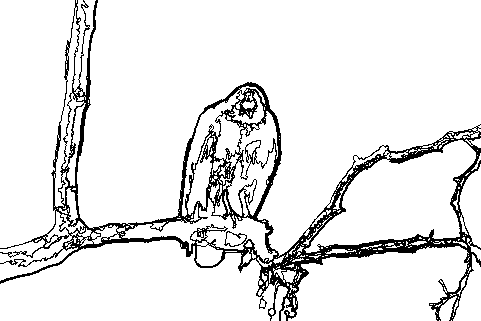}} &
    \fbox{\includegraphics[width=.255\textwidth]{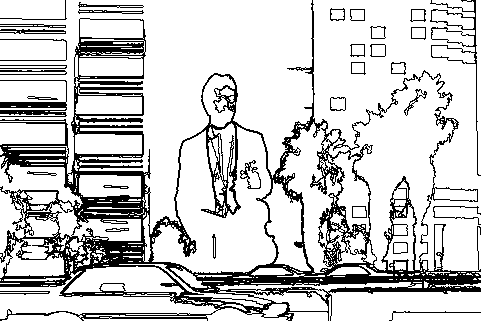}} &
    \fbox{\includegraphics[width=.255\textwidth]{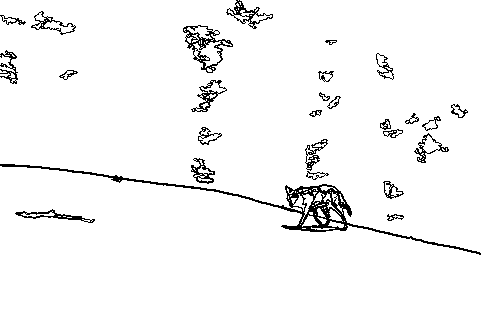}} &
    \fbox{\includegraphics[width=.15\textwidth]{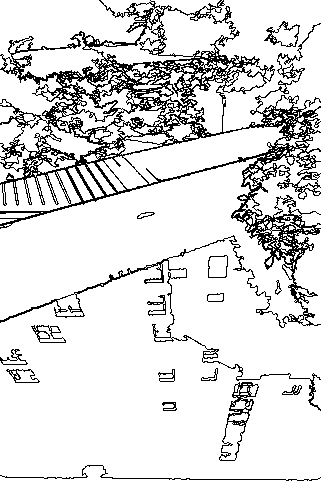}} \tabularnewline

    \begin{sideways}\textsc{tma-mcb}\end{sideways} &
    \fbox{\includegraphics[width=.255\textwidth]{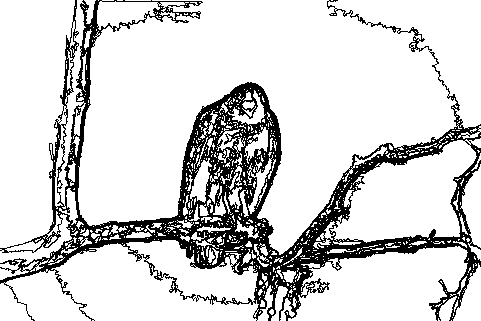}} &
    \fbox{\includegraphics[width=.255\textwidth]{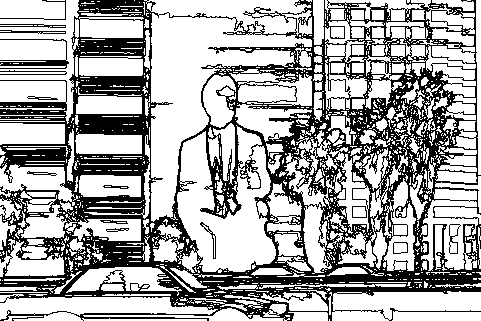}} &
    \fbox{\includegraphics[width=.255\textwidth]{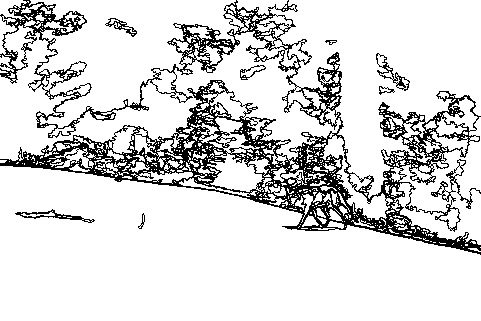}} &
    \fbox{\includegraphics[width=.15\textwidth]{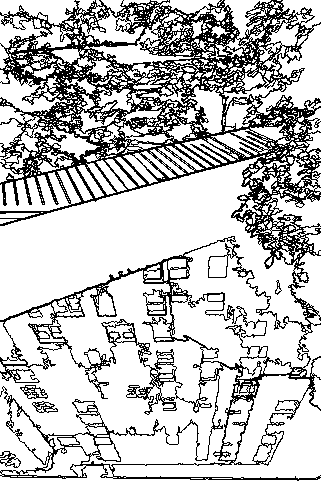}} \tabularnewline


    \begin{sideways}\textsc{dmm-mcrb}\end{sideways} &
    \fbox{\includegraphics[width=.25\textwidth]{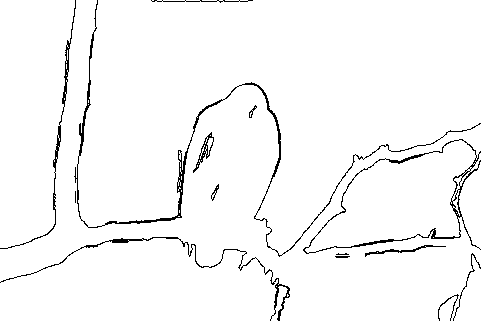}} &
    \fbox{\includegraphics[width=.25\textwidth]{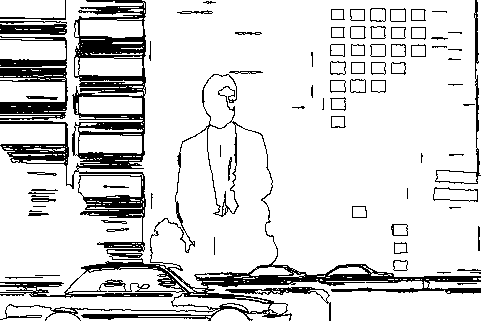}} &
    \fbox{\includegraphics[width=.25\textwidth]{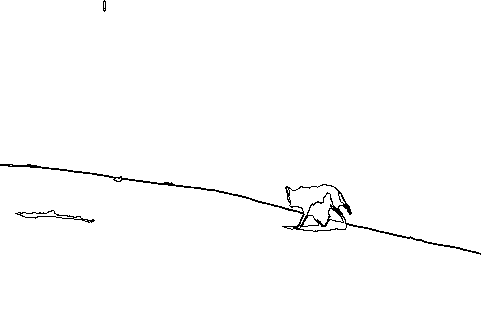}} &
    \fbox{\includegraphics[width=.15\textwidth]{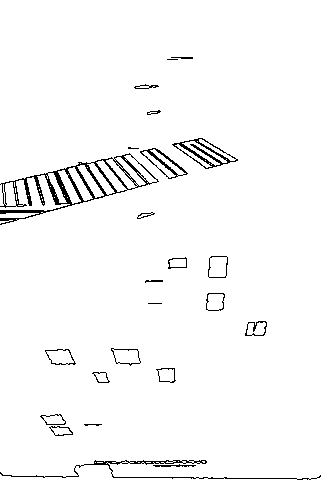}} \tabularnewline

    \begin{sideways}\textsc{tma-mcrb}\end{sideways} &
    \fbox{\includegraphics[width=.25\textwidth]{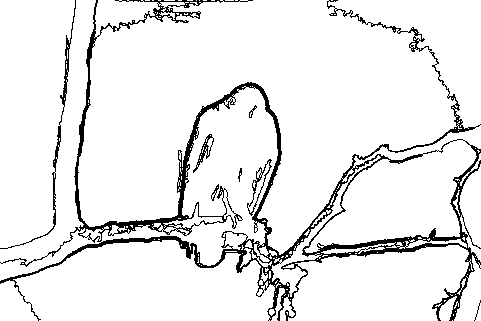}} &
    \fbox{\includegraphics[width=.25\textwidth]{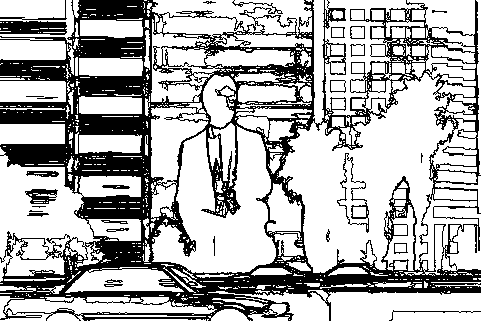}} &
    \fbox{\includegraphics[width=.25\textwidth]{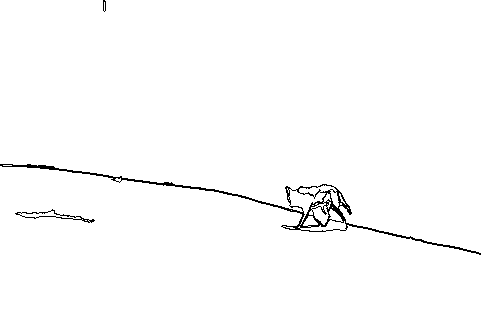}} &
    \fbox{\includegraphics[width=.15\textwidth]{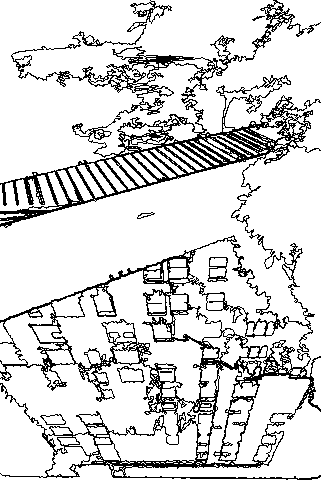}} \tabularnewline

  \end{tabular}

  \caption{
  Comparison of DMM-MCB, TMA-MCB, DMM-MCRB, and TMA-MCRB. DMM-MCRB may produce severe underdetections.}
  \label{fig:contrastedRegularDMMvsTMA}
\end{figure*}

\section{Conclusions}
\label{sec:conclusions}

This work presents a novel contribution to the field of image structure retrieval. We think that the topographic map is an extremely well suited theoretical framework to perform that task. Mathematical Morphology has proved this in depth and extension with the work it developed. In that direction, we based our work on the algorithm called Meaningful Boundaries~\cite{desolneux08}, introducing a few deep modifications that help improve the results.

First, the criterion of meaningfulness was relaxed. In the new definition, a level line can have a non-causal piece and still be considered perceptually important. We also provide an intuitive parameter that allows to deal with the length of that piece.

Second, we analyze the interaction of two fundamental cues for the perception of contours: contrast and regularity. We propose a new way of combining these features in which they compete for the control of the boundary saliency. Experiments show the suitability of this combination strategy.


Examples of the resulting image structure retrieval method were presented, soundly showing that its theoretical advantages are also validated in practice. The proposed method increases significantly the robustness and the stability of the detections.

As a final remark, the maximality constraint presents some issues. All the packets of parallel level line pieces are not eliminated by it. The exploration of another kind of algorithm based on maximality along the gradient direction might help to eliminate this effect~\cite{meinhardt08b}.

\appendix

\section{Proofs}

A classical lemma will be needed in the following.
\begin{lemma}
  Let $X$ be a real random variable. Let $F (x) = \Pr(X \leq x)$ be the repartition function of $X$. Then, for all $t \in (0, 1)$,
  \begin{equation*}
  \Pr(F(X) < t) \leq t \text{.}
  \end{equation*}
  In the same way, let $H(x) = \Pr(X \geq x)$. Then for all $t \in [0, 1]$,
  \begin{equation*}
  \Pr(H(X) < t) \leq t \text{.}
  \end{equation*}
  \label{lem:classic}
\end{lemma}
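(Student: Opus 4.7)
The plan is to reduce the statement to a direct calculation using monotonicity of the distribution function, being careful about jumps (which is where the inequality, as opposed to equality, in the conclusion comes from). The classical fact behind the lemma is that $F(X)$ is uniformly distributed on $[0,1]$ when $F$ is continuous; in the general case $F(X)$ is only stochastically larger than a uniform, and that is exactly what the inequality expresses.

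For the first claim, I would introduce the generalized quantile $q_t = \inf\{x \in \R : F(x) \geq t\}$. Because $F$ is non-decreasing, the event $\{F(X) < t\}$ can be rewritten as $\{X < q_t\}$: indeed, if $x < q_t$ then by definition of the infimum $F(x) < t$, and conversely if $x \geq q_t$ then by right-continuity of $F$ one has $F(x) \geq F(q_t) \geq t$. Consequently,
\begin{equation*}
  \Pr(F(X) < t) = \Pr(X < q_t) = F(q_t^{-}),
\end{equation*}
where $F(q_t^{-})$ denotes the left limit of $F$ at $q_t$. Finally, $F(q_t^{-}) = \lim_{x \uparrow q_t} F(x) \leq t$, since $F(x) < t$ for every $x < q_t$. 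This gives the first inequality.

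For the second claim I would repeat the argument with the obvious sign changes. Set $r_t = \sup\{x \in \R : H(x) \geq t\}$; since $H$ is non-increasing with $H(-\infty) = 1$ and $H(+\infty) = 0$, the event $\{H(X) < t\}$ is $\{X > r_t\}$, and hence $\Pr(H(X) < t) = \Pr(X > r_t) = H(r_t^{+}) \leq t$, where the limit from the right is bounded by $t$ because $H(x) < t$ for all $x > r_t$. This handles the endpoint $t = 0$ trivially ($\Pr(H(X) < 0) = 0$) and covers $t = 1$ as well (the probability is trivially at most $1$), which is why the range can be extended to the closed interval $[0,1]$.

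The only subtlety, and hence the ``hard'' part of the write-up, is the treatment of atoms of the distribution: when $F$ jumps across the level $t$ at $q_t$, the event $\{F(X) < t\}$ does not include the atom at $q_t$, which is precisely what lets us bound $\Pr(X < q_t)$ by $t$ rather than by $F(q_t)$. I would make sure to point out this subtle use of the strict inequality $F(X) < t$ (as opposed to $\leq t$), because that is exactly what makes the lemma correct in the discrete case.
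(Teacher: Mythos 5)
Your proof is correct, but note that the paper itself offers no proof of this lemma: it is stated as ``a classical lemma'' and used as a black box in the two NFA propositions, so there is nothing internal to compare against. Your quantile argument is the standard complete proof of that classical fact, and the delicate points are all handled properly: the identification $\{F(X)<t\}=\{X<q_t\}$ (which needs both monotonicity and the right-continuity of $F$ to get $F(q_t)\geq t$), the passage to the left limit $\Pr(X<q_t)=F(q_t^-)\leq t$, and your explicit remark about atoms, which is indeed exactly where the strict inequality in the event matters and where equality with $t$ can fail. Two small observations. First, for $t\in(0,1)$ you should note in passing that $q_t$ is finite, which follows from $F(x)\to 0$ and $F(x)\to 1$ at $\mp\infty$; this is why the first claim is restricted to the open interval. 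Second, in the ``obvious sign changes'' for $H$, the step $x\leq r_t \Rightarrow H(x)\geq t$ at the point $x=r_t$ itself silently uses that $H(x)=\Pr(X\geq x)$ is \emph{left}-continuous (the mirror image of the right-continuity of $F$, by continuity from above of the measure along $\{X\geq x_n\}\downarrow\{X\geq x\}$); it does hold, but it deserves the same one-line justification you gave for $F(q_t)\geq t$, since with the alternative tail $\Pr(X>x)$ the corresponding step would fail. With that line added, your write-up is a complete and self-contained proof of the lemma the paper leaves uncited.
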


\subsection{Meaningful Contrasted Boundaries}
\label{sec:proofNFA_C}

This section proves that TMA-MCB (see Definition~\ref{def:nfaContrastedCurve_k}, p.~\pageref{def:nfaContrastedCurve_k}) are theoretically correct.
As usual, being correct means that the following proposition holds.
\begin{proposition}
  The expected number of TMA \meps-meaningful boundaries in a finite set $E$ of random curves is smaller than \meps.
\end{proposition}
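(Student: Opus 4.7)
My plan is the standard NFA-certification argument: linearity of expectation, a union bound over the parameter $k$, and a reduction of each term to the classical Lemma~\ref{lem:classic} on $\Pr(F(X) \leq t) \leq t$.

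By linearity of expectation, the expected number of TMA \meps-meaningful boundaries in $E$ equals $\sum_{C \in E} \Pr(\NFA_K(C) < \eps)$, and since the NFA carries the factor $N_{ll}$ precisely to absorb the number of tests, it is enough to prove $\Pr(\NFA_K(C) < \eps) \leq \eps/N_{ll}$ for each fixed curve $C \in E$. Unpacking the definition, the event $\NFA_K(C) < \eps$ is
\[
\min_{k < K} \widetilde{\bintail}(n\lsn{2},\, k\lsn{2};\, H_c(\mu_k)) < \frac{\eps}{N_{ll}\,K}.
\]
A union bound over the $K$ indices $k$ then reduces the claim to showing, for each fixed $k$,
\[
\Pr\!\left( \widetilde{\bintail}(n\lsn{2},\, k\lsn{2};\, H_c(\mu_k)) < t \right) \leq t, \qquad t := \frac{\eps}{N_{ll} K},
\]
because summing the $K$ such contributions yields $\eps/N_{ll}$, and a final sum over $C \in E$ gives $\eps$.

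The heart of the argument is thus this per-$k$ estimate. Under $\Hy_0$ the contrasts $c_i = |Du|(x_i)$ are i.i.d.\ with survival function $H_c$, so $V_i := H_c(c_i)$ are i.i.d.\ uniform on $[0,1]$; because $H_c$ is nonincreasing, $H_c(\mu_k)$ coincides (up to orientation) with a uniform order statistic of the $V_i$, whose distribution function is exactly a regularized incomplete beta function. The identity $\widetilde{\bintail}(N, R; p) = I(p; R, N-R+1)$ shows that, in the integer case, the map $p \mapsto \widetilde{\bintail}(n, r; p)$ is the CDF of $H_c(\mu_k)$ for the appropriate $r$, and Lemma~\ref{lem:classic} then delivers the bound $\Pr(F(H_c(\mu_k)) < t) \leq t$ without further work.

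The main obstacle I anticipate is the noninteger scaling $\lsn{2} = l/(2n)$, which forbids a literal identification with a uniform order statistic of $n\lsn{2}$ i.i.d.\ uniforms. I plan to handle this by exploiting monotonicity and continuity of $I(p; a, b)$ in both the parameter $p$ and the shape parameters: the continuous $\widetilde{\bintail}$ dominates the relevant discrete tail probability, and Lemma~\ref{lem:classic} requires only a function that stochastically dominates the true CDF of $X$, so the inequality $\Pr(F(X) < t) \leq t$ still applies. Once this domination step is established, assembling the per-$k$ estimate, the union bound over $k$, and the linearity-of-expectation sum over $C \in E$ closes the argument.
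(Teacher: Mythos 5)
Your proof has the same architecture as the paper's: linearity of expectation, a union bound over the $K$ values of $k$ with each term allotted $\eps/(N_{ll}K)$, and a reduction of the per-$k$ estimate to Lemma~\ref{lem:classic}. The only structural difference is where the randomness sits in that last step. You take $H_c(\mu_k)$ as the random variable and read the binomial tail, as a function of its third argument, as the CDF of a uniform order statistic (one caveat: the ``appropriate $r$'' is $n-k$, the number of sample points above $\mu_k$ --- the CDF of the $(k{+}1)$-th largest of $n$ uniforms is $I(p;\,n-k,\,k+1)$, which is also the reading under which $\NFA_{K=1}=\NFA$ holds). The paper's proof instead takes as random variable the length $L_k$ of the pieces of $C$ on which $|Du| \ge \mu_k$, reads the binomial tail, as a function of its second argument, as the survival function of $L_k$, and applies the second half of Lemma~\ref{lem:classic}. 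These are dual readings of the same order-statistic identity; in the integer case yours is arguably the more explicit of the two.

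The genuine gap is your closing ``domination'' step. The principle you invoke is sound: if $G \ge F$ pointwise and $F$ is the true CDF of $X$, then $\Pr(G(X)<t) \le \Pr(F(X)<t) \le t$. But the domination itself does not hold here. What you need is
\[
I(p;\, n-k,\, k+1) \;\le\; I\bigl(p;\, (n-k)\,\lsn{2},\, k\,\lsn{2}+1\bigr)
\qquad \text{for all } p\in[0,1],
\]
and this fails whenever $\lsn{2} \neq 1$: scaling both beta shape parameters leaves the mean essentially at $(n-k)/n$ and only changes the concentration, so the two CDFs cross. Concretely, as $p \to 0$ one has $I(p;a,b)=\Theta(p^{a})$, so if $\lsn{2}>1$ the right-hand side is $\Theta\bigl(p^{(n-k)\lsn{2}}\bigr)=o\bigl(p^{\,n-k}\bigr)$ and lies \emph{below} the left-hand side for small $p$ --- exactly the small-NFA regime the proposition is about; symmetrically, if $\lsn{2}<1$ the inequality fails near $p=1$. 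So the step cannot be completed as planned. Note that the paper does not prove this either; it sidesteps the issue by construction: in its proof the background model is taken so that the survival function of the continuous random length $L_k$ is \emph{exactly} $\widetilde{\bintail}(n \cdot \lsn{2}, \cdot\,; H_c(\mu_k))$, after which Lemma~\ref{lem:classic} applies with no interpolation mismatch. To close your argument, do the same --- declare the interpolated tail to be the exact distribution of the tested statistic under $\Hy_0$, i.e., make the continuous relaxation part of the null model --- or restrict to the case $l=2n$, where your order-statistic identification already finishes the proof.
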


\begin{proof}
  For this proof we follow the scheme from Proposition~12 in~\cite{cao08theory}.

  For all $k$, let us denote by $L_k$ the random length of the pieces of $C$ such that $|Du| \geq \mu_k$. From Definition~\ref{def:nfaContrastedCurve_k}, any curve $C$ is \meps-meaningful if there is at least one $0 \leq k < K$ such that
  $N_{ll} \ K \ \widetilde{\bintail} (n \cdot \lsn{2}, L_k; H_c(\mu_k)) < \eps$.
  Let us denote by $E(C, k)$ this event and recall that all probabilities are under $\Hy_0$:
  \begin{equation*}
    \Pr (E(C, k)) \stackrel{\mathrm{def}}{=} \Pr \left( \widetilde{\bintail} (n \cdot \lsn{2}, L_k; H_c(\mu_k) < \frac{\eps}{N_{ll} \ K} \right) \text{.}
  \end{equation*}

  From Lemma~\ref{lem:classic}, we denote
  \begin{align*}
    X & = L_k  & S(x) & = \widetilde{\bintail} (n \cdot \lsn{2}, x; Hc(\mu_k)) \\
    t & = \frac{\eps}{N_{ll} \ K} &\quad \Pr(S(X) <  t) & =  \Pr (E(C, k))
  \end{align*}
  and finally
  \begin{equation*}
    \Pr (E(C, k)) \leq \frac{\eps}{N_{ll} \cdot K} \text{.}
  \end{equation*}

  The event defined by ``C is \meps-meaningful'' is $$E(C) = \bigcup_{0 \leq k < K} E(C, k).$$ Let us denote by $\expectation_{\Hy_0}$ the mathematical expectation under $\Hy_0$. The expected number of \meps-meaningful curves is defined as $\mathbb{E}_{\Hy_0} \left( \sum_{C \in \mathcal{C}} \mathbf{1}_{E(C)} \right)$ where $\mathbf{1}_{A}$ is the indicator function of the set $A$.
  Then
  \begin{equation*}
    \expectation_{\Hy_0} \left( \sum_{C \in \mathcal{C}} \mathbf{1}_{E(C)} \right) \leq \sum_{\substack{ C \in \mathcal{C} \\ 0 \leq k < K }} \Pr \left( E(C, k) \right)
    \leq \sum_{\substack{C \in \mathcal{C} \\ 0 \leq k < K}} \frac{\eps}{N_{ll} \cdot K} = \eps.
  \end{equation*}

\end{proof}
\subsection{Meaningful Contrasted Regular Boundaries}
\label{sec:proofNFA_CR}

TMA \meps-meaningful boundaries (see Definition~\ref{def:nfaContrastedSmoothCurve_k}, p.~\pageref{def:nfaContrastedSmoothCurve_k}) are correct is the following proposition holds.
\begin{proposition}
  The expected number of \meps-meaningful contrasted regular boundaries, obtained with Definition~\ref{def:nfaContrastedSmoothCurve_k}, in a finite random set $E$ of random curves is smaller than \meps.
\end{proposition}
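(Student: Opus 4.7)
The plan is to follow the blueprint of the TMA-MCB proof in Appendix~\ref{sec:proofNFA_C}, with two adaptations: (i) handle the two-gestalt maximum by treating the event as an intersection, and (ii) exploit the independence of contrast and regularity assumed in the background model. Since $\max(a,b) < t$ if and only if both $a<t$ and $b<t$, the event $E(C)$ that a curve $C$ is $\eps$-meaningful decomposes as $E_c(C) \cap E_s(C)$, where
\begin{align*}
E_c(C) &= \Bigl\{\min_{k<K_c} I_c(C,k)^2 < t\Bigr\}, \\
E_s(C) &= \Bigl\{\min_{k<K_s} I_s(C,k)^2 < t\Bigr\},
\end{align*}
and $t = \eps/(N_{ll}\, K_c\, K_s)$. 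Under $\Hy_0$, contrast and regularity are independent, so $E_c(C)$ and $E_s(C)$ are independent events and $\Pr(E(C)) \leq \Pr(E_c(C))\,\Pr(E_s(C))$.

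Next I would bound each factor separately. Rewriting $E_c(C) = \{\min_{k<K_c} I_c(C,k) < \sqrt{t}\}$, a union bound gives $\Pr(E_c(C)) \leq \sum_{k<K_c} \Pr(I_c(C,k) < \sqrt{t})$. Exactly as in the TMA-MCB proof, Lemma~\ref{lem:classic} applied with $X = L_k$ and the tail function $\widetilde{\bintail}(n\cdot\lsn{2},\cdot\,;H_c(\mu_k))$ yields $\Pr(I_c(C,k) < \sqrt{t}) \leq \sqrt{t}$, so $\Pr(E_c(C)) \leq K_c \sqrt{t}$. The identical argument with $H_s$, $\rho_k$, and scale $2s$ in place of $2$ gives $\Pr(E_s(C)) \leq K_s \sqrt{t}$. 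Multiplying the two bounds,
\[
\Pr(E(C)) \leq K_c K_s\, t = \frac{\eps}{N_{ll}}.
\]
Summing over the $N_{ll}$ curves of $E$ and using linearity of expectation under $\Hy_0$ delivers the claim $\expectation_{\Hy_0}\bigl(\sum_{C} \mathbf{1}_{E(C)}\bigr) \leq \eps$.

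The design choice that makes the bound close is the squaring of $I_c(C,k)$ and $I_s(C,k)$ inside the NFA: taking square roots produces a threshold $\sqrt{t}$ in each gestalt, so that the two independent $\sqrt{t}$ bounds multiply back to $t$ and exactly absorb the $K_c K_s$ factor. This mirrors the same reason why in the remark about Cao~\etal's cooperative definition the exponents had to be boosted to $l^2/2s$: here, because the two gestalts enter through a max (a logical \emph{and}) rather than a product, a mere squaring suffices. I do not anticipate any genuine obstacle; the only technical point to verify is that Lemma~\ref{lem:classic} applies separately to each $k$ for both gestalts, which is inherited without change from the TMA-MCB proof and uses only the i.i.d.\ assumptions on $|Du|$ and on $R_s$ built into $\Hy_0$.
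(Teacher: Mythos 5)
Your proof is correct and follows essentially the same route as the paper's: both decompose the event $\{\max(\cdot,\cdot) < t\}$ as the intersection of the contrast and regularity events, factor its probability using the independence of the two gestalts under $\Hy_0$, and bound each factor via Lemma~\ref{lem:classic} exactly as in the TMA-MCB proof. If anything, your bookkeeping is cleaner: you apply the union bound over $k$ inside each factor (obtaining $K_c\sqrt{t}$ and $K_s\sqrt{t}$, whose product is exactly $\eps/N_{ll}$), whereas the paper states per-factor bounds of $\sqrt{t}$ and recovers the $K_c K_s$ multiplicity only through the final triple sum over $i$, $k$, $k'$.
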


\begin{proof}
  The same assumptions from the previous proof hold.
  
  Let $X_i = \mathbf{1}_{C_i \mathrm{\ is\ meaningful}}$ and $N = \#E$. Let us denote by $\expectation_{\Hy_0}$ the mathematical expectation under $\Hy_0$. Then
  \begin{multline}
    \expectation \left( \sum_{i=1}^{N} \sum_{k=1}^{K_c} \sum_{k'=1}^{K_s} X_i \right) = \\ \expectation \left( \expectation \left( \sum_{i=1}^{n} \sum_{k=1}^{k_c} \sum_{k'=1}^{k_s} X_i \ |\ N = n, K_c = k_c, K_s = k_s \right) \right) \textbf{.}
  \end{multline}
  We have assumed that $N$ is independent from the curves and $K_c$, $K_s$ are input parameters. Thus, conditionally to $N = n$, the law of $\sum_{i=1}^{N} X_i$ is the law of $\sum_{i=1}^{n} Y_i$ where 
    $$ \displaystyle Y_i = \mathbf{1}_{n\, k_c\, k_s\, \max \left(\min_{0 \leq k < k_c}I_c (C_i, k)^2,\  \min_{0 \leq k' < k_s} I_s (C_i, k')^2 \right)  < \eps}. $$
  By the linearity of expectation
  \begin{equation}
    \expectation \left( \sum_{i=1}^{n} \sum_{k=1}^{k_c} \sum_{k'=1}^{k_s} X_i \right)
    = \expectation \left( \sum_{i=1}^{n} \sum_{k=1}^{k_c} \sum_{k'=1}^{k_s} Y_i \right)
    = \sum_{i=1}^{n} \sum_{k=1}^{k_c} \sum_{k'=1}^{k_s} \expectation \left( Y_i \right) \text{.}
  \end{equation}
  Since $Y_i$ is a Bernoulli variable,
  \begin{multline}
    \expectation (Y_i) = \Pr (Y_i = 1)
    = \Pr \left( n\, k_c\, k_s \
    \max \left(
    \begin{split}
    \min_{0 \leq k < k_c}I_c (C_i, k)^2 \\
    \min_{0 \leq k' < k_s} I_s (C_i, k')^2
    \end{split}
    \right)  < \eps \right) =\\
    = \sum_{l=0}^{\infty} \Pr \left( n\, k_c\, k_s
    \max \left(
    \begin{split}
    \min_{0 \leq k < k_c}I_c (C_i, k)^2 \\
    \min_{0 \leq k' < k_s} I_s (C_i, k')^2
    \end{split}
    \right)  < \eps \ \Big|\ L_i=l
    \right)
    \cdot \Pr (L_i=l) \text{.}
  \end{multline}
  Let us finally denote by $\alpha_1 \dots \alpha_l$ the $l$ independent values of $|Du|$ and $\gamma_1 \dots \gamma_{l/s}$ the $l/s$ independent values of $|R_s|$.
  Again, we have assumed that $L_i$ is independent of the gradient and regularity distributions in the image. Thus conditionally to $L_i = l$,
  \begin{multline}
    \Pr \left( n\, k_c\, k_s
    \max \left(
    \begin{split}
    \min_{0 \leq k < k_c} I_c (C_i, k)^2 \\
    \min_{0 \leq k' < k_s} I_s (C_i, k')^2
    \end{split}
    \right) < \eps \ |\ L_i=l \right) = \\
    = \Pr \left( n\, k_c\, k_s
    \max \left(
    \begin{split}
    \min_{0 \leq k < k_c} I_c (C_i, k)^2 \\
    \min_{0 \leq k' < k_s} I_s (C_i, k')^2
    \end{split}
    \right) < \eps \right) = \\
    = \Pr \left(
    \max \left(
    \begin{split}
    \min_{0 \leq k < k_c} I_c (C_i, k) \\
    \min_{0 \leq k' < k_s} I_s (C_i, k')
    \end{split}
    \right) < \left( \frac{\eps}{n\, k_c\, k_s} \right)^{1 / 2} \right) = \\
    = \Pr \left( \min_{0 \leq k < k_c}I_c (C_i, k) < \left( \frac{\eps}{n\, k_c\, k_s} \right)^{1 / 2} \right) \cdot \\
    \Pr \left( \min_{0 \leq k' < k_s}I_s (C_i, k') < \left( \frac{\eps}{n\, k_c\, k_s} \right)^{1 / 2} \right) \text{.}
  \end{multline}
  From proof of Proposition~\ref{prop:nfaContrastedCurve_k},
  \begin{multline}
    \Pr \left( \min_{0 \leq k < k_c}I_c (C_i, k) < \left( \frac{\eps}{n\, k_c\, k_s} \right)^{1 / 2} \right) \cdot \\
    \Pr \left( \min_{0 \leq k' < k_s}I_s (C_i, k') < \left( \frac{\eps}{n\, k_c\, k_s} \right)^{1 / 2} \right) \leq \\
    \leq \left( \frac{\eps}{n\, k_c\, k_s} \right)^{1 / 2} \left( \frac{\eps}{n\, k_c\, k_s} \right)^{1 / 2} = \frac{\eps}{n\, k_c\, k_s} \textbf{.}
  \end{multline}
  Finally
  \begin{equation}
    \expectation (Y_i) \leq \frac{\eps}{n\, k_c\, k_s} \quad \Rightarrow \quad \sum_{i=1}^{n} \sum_{k=1}^{k_c} \sum_{k'=1}^{k_s} \expectation (Y_i) \leq \eps \text{.}
  \end{equation}
  
\end{proof}

\section*{Acknowledgments}
We also thank Rafael Grompone von Gioi for his fruitful comments.
We acknowledge financial support by CNES (R\&T Echantillonnage Irregulier DCT / SI / MO - 2010.001.4673), FREEDOM (ANR07-JCJC-0048-01), Callisto (ANR-09-CORD-003), ECOS Sud U06E01, STIC Amsud (11STIC-01 - MMVPSCV) and the Uruguayan Agency for Research and Innovation (ANII) under grant PR-POS-2008-003.


\end{document}